\documentclass[11pt]{article}

\usepackage[utf8]{inputenc}
\usepackage[T1]{fontenc}
\usepackage{geometry}
\usepackage{amsmath, amssymb, amsthm}
\usepackage{xcolor}
\usepackage{graphicx}
\usepackage{booktabs}
\usepackage{microtype}
\usepackage[colorlinks=true, allcolors=blue]{hyperref}
\usepackage[numbers,sort&compress]{natbib}
\DeclareMathOperator*{\E}{\mathbb{E}}

\newcommand{\N}{\mathcal{N}}

\newcommand{\data}{\mathcal{X}}
\newcommand{\vu}{\mathbf{u}}
\newcommand{\vv}{\mathbf{v}}
\newcommand{\vx}{\mathbf{x}}

\newcommand{\ve}{\boldsymbol{\epsilon}}

\newcommand{\Emarg}{E_{\text{marg}}}        

\newtheorem{lemma}{Lemma}

\newtheorem{proposition}{Proposition}
\theoremstyle{definition}

\title{The Geometry of Noise: \\Why Diffusion Models Don't Need Noise Conditioning}
\author{Mojtaba Sahraee-Ardakan,  Mauricio Delbracio,  Peyman Milanfar \\[1.5ex]
Google} 
\date{}

\begin{document}

\maketitle

\begin{abstract}

Autonomous (noise-agnostic) generative models, such as Equilibrium Matching and blind diffusion, challenge the standard paradigm by learning a single, time-invariant vector field that operates without explicit noise-level conditioning. While recent work suggests that high-dimensional concentration allows these models to implicitly estimate noise levels from corrupted observations, a fundamental paradox remains: what is the underlying landscape being optimized when the noise level is treated as a random variable, and how can a bounded, noise-agnostic network remain stable near the data manifold where gradients typically diverge?

We resolve this paradox by formalizing Marginal Energy, $E_{\text{marg}}(\mathbf{u}) = -\log p(\mathbf{u})$, where $p(\mathbf{u}) = \int p(\mathbf{u}|t)p(t)dt$ is the marginal density of the noisy data integrated over a prior distribution of unknown noise levels. We prove that generation using autonomous models is not merely blind denoising, but a specific form of Riemannian gradient flow on this Marginal Energy. Through a novel relative energy decomposition, we demonstrate that while the raw Marginal Energy landscape possesses a $1/t^p$ singularity normal to the data manifold, the learned time-invariant field implicitly incorporates a local conformal metric that perfectly counteracts the geometric singularity, converting an infinitely deep potential well into a stable attractor.

We also establish the structural stability conditions for sampling with autonomous models. We identify a ``Jensen Gap'' in noise-prediction parameterizations that acts as a high-gain amplifier for estimation errors, explaining the catastrophic failure observed in deterministic blind models. Conversely, we prove that velocity-based parameterizations are inherently stable because they satisfy a bounded-gain condition that absorbs posterior uncertainty into a smooth geometric drift.

\end{abstract}


\section{Introduction}

Generative modeling has seen immense progress over the past decade, tracing its roots to the foundational non-equilibrium thermodynamics approach introduced by~\citep{sohl2015deep}. This paradigm was popularized and scaled by Denoising Diffusion Probabilistic Models (DDPM)~\cite{ho2020denoising} and subsequent architectural refinements \cite{dhariwal2021diffusion}, Score-based models~\cite{song2019generative,song2020improved,vincent2011connection}, and subsequently unified under the continuous-time mathematical framework of score-based stochastic differential equations (SDEs)\cite{song2020score}. For a broader conceptual overview of how these perspectives on diffusion have evolved, see \cite{dieleman2023perspectives,gao2025diffusion}. More recently, the field has evolved beyond pure diffusion to embrace velocity-based transport formulations \cite{liu2022flow,delbracio2023inversion,albergo2025stochastic}, notably Flow Matching~\cite{lipman2023flow}, as well as stationary targets like Equilibrium Matching (EqM)\cite{wang2025equilibrium}.

A defining characteristic of these standard generative models is their reliance on explicit time-conditioning. They typically learn a conditional score function or velocity field, such as $\epsilon_{\theta}(\vu,t)$, defining a dynamic field that changes with time, i.e. a field over $\mathbb{R}^{D}\times[0,1]$. In these frameworks, the network relies on the time variable $t$ to dictate the current scale of corruption and orient the trajectory.

In contrast, recent work has explored autonomous approaches, such as EqM~\cite{wang2025equilibrium} or noise-blind diffusion~\cite{song2020improved,sun2025isnoise,kadkhodaie2026blinddenoisingdiffusionmodels}, which learn a single noise-agnostic vector field $f_{\theta}(\vu)$ independent of $t$. We denote the optimal vector field learned by such a model as $f^{*}(\vu)$. It is important to distinguish the model's architecture from the resulting inference dynamics: while the sampling process may still utilize a time-dependent schedule to scale the trajectory, the underlying neural network is strictly time-invariant. This autonomous approach presents a fundamental puzzle: the ``correct'' gradient to follow from a point $\vu$ should depend heavily on its noise level. How can a single, static vector field effectively guide a sample from pure noise (high $t$) and also guide a sample from light noise (low $t$), all while ensuring its stationary points accurately reflect the clean data $\data$?

In this paper, we resolve this paradox. We show that such an autonomous model is not merely acting as a ``blind'' denoiser, but is implicitly learning a hybrid field that is fundamentally tied to a single, non-parametric marginal energy landscape ($\Emarg$). This energy is defined as the negative log-likelihood of the marginal data distribution $p(\vu) = \int p(\vu | t) p(t) dt$:
\begin{equation}
    \Emarg(\vu) = -\log \left(\int p(\vu | t) p(t) dt\right).
\end{equation}

We build our argument as follows:
\begin{enumerate}
    \item \textbf{The Energy Paradox:} We define the marginal energy and derive its explicit gradient. We rigorously show that this gradient diverges near the data manifold, creating an infinitely deep potential well that forbids stable gradient descent.
    \item \textbf{Energy-Aligned Decomposition:} We analyze the learned autonomous vector fields, proving they decompose into exactly three geometric components: a natural gradient, a transport correction (covariance) term, and a linear drift.
    \item \textbf{Riemannian Gradient Flow:} We resolve the singularity paradox by showing that noise-agnostic models implicitly implement a Riemannian gradient flow. The learned vector field incorporates a local conformal metric (the effective gain) that perfectly preconditions and counteracts the geometric singularity of the raw energy landscape.
    \item \textbf{Stability of Sampling with Autonomous Models:} We establish the mathematical conditions necessary for sampling stability. We prove that velocity-based parameterizations (e.g., Flow Matching, EqM) succeed because they absorb posterior uncertainty into a stable drift, whereas standard noise-prediction parameterizations (e.g., DDPM/DDIM) act as high-gain amplifiers for estimation errors, leading to structural instability.
    
\end{enumerate}

\section{Related Work}
\label{sec:related_work}

Our work unifies and grounds three recent lines of inquiry in generative modeling: noise unconditional generation, equilibrium dynamics, and energy-based training.

\paragraph{Noise-Blind Denoising.}
The prevailing paradigm in score-based modeling relies on conditioning the network on the noise level $t$ \citep{song2019generative}. However, \citet{sun2025isnoise} recently challenged this, demonstrating that ``blind'' models can achieve high-fidelity generation without $t$. This connects to earlier findings in image restoration by \citet{gnanasambandam2020onesize}, who showed that a single ``one-size-fits-all'' denoiser could approximate an ensemble of noise-specific estimators. Our work provides the rigorous theoretical justification for these observations, identifying $\Emarg$ as the implicit objective and connecting the gradients of this energy to the autonomous fields via concentration of the noise level given the noisy signal.

In concurrent work, \citet{kadkhodaie2026blinddenoisingdiffusionmodels} provide a highly rigorous statistical analysis of blind denoising diffusion models (BDDMs) for data with low intrinsic dimensionality. They analytically prove that under the assumption that the intrinsic dimension is much smaller than the ambient dimension ($k \ll d$), BDDMs can accurately estimate the true noise level from a single observation and implicitly track a valid noise schedule, providing robust finite-time sampling guarantees. While their work offers an exhaustive and rigorous statistical treatment of this low-dimensional data regime, our work situates this concentration of measure as a specific asymptotic case (Regime I) within a broader geometric framework. Specifically, our focus lies in connecting noise-blind generation to the gradient of the marginal energy landscape, revealing the process as a Riemannian gradient flow. Furthermore, whereas their analysis focuses primarily on specific Langevin-type SDE discretizations, our framework is generalized across arbitrary affine diffusion processes and learning targets. Finally, while they demonstrate that blind denoisers can outperform non-blind counterparts by avoiding schedule mismatch errors, our framework allows us to prove why models trained to predict noise (e.g., DDPM/DDIM) are structurally unstable for autonomous generation due to gradient singularities, demonstrating why velocity- or signal-based targets are strictly necessary.

\paragraph{Energy Landscapes \& Singularities.}
While the framework of energy-based learning is well-established \cite{lecun2006tutorial,du2019implicit}, explicitly learning energy functions is known to be unstable \citep{du2021improved}. Recent approaches like ``Dual Score Matching'' \citep{guth2025learning} attempt to stabilize this by learning a joint energy via both space and time scores. Our work analyzes the \textit{marginal} energy, aligning with \citet{scarvelis2025closed}, who proved that the exact closed-form score of a finite dataset degenerates into a nearest-neighbor lookup. While they address this by smoothing the score kernel, we show that autonomous flow models resolve it implicitly via a Riemannian preconditioner.

\paragraph{Equilibrium Dynamics \& Flow.}
\citet{wang2025equilibrium} introduced Equilibrium Matching (EqM) to replace time-dependent fields with a single time-invariant gradient. This parallels Action Matching \citep{neklyudov2023action}. Our analysis reveals that EqM is unique: it implements a natural gradient descent on the marginal energy. This connects EqM to the fundamental JKO scheme \citep{jordan1998variational}, unifying ``transport'' and ``restoration'' under a single autonomous field.

\section{Preliminaries: A Unified Schedule Formulation}
\label{sec:preliminaries}

To provide a general theory covering Diffusion Models (DDPM, EDM) and Flow Matching (EqM), we adopt the unified affine formulation proposed by \citet{sun2025isnoise}.

Let $t \in [0, 1]$ index the noise level. The noisy observation $\vu_t$ is constructed from clean data $\vx$ and noise $\ve \sim \N(\mathbf{0}, \mathbf{I})$ via method-specific schedule functions $a(t)$ and $b(t)$:
\begin{equation}
    \vu_t = a(t)\vx + b(t)\ve.
    \label{eq:unified_forward}
\end{equation}
Assuming that the data is normalized, the signal-to-noise ratio (SNR) at time $t$ is
\begin{equation}
    \text{SNR} = \frac{a^2(t)}{b^2(t)}.\label{eq:SNR}
\end{equation}
Generative models are typically trained to predict a linear target $r(\vx, \ve, t) = c(t)\vx + d(t)\ve$ by minimizing the Mean Squared Error (MSE):
\begin{equation}
    \mathcal{L}(f) = \E_{\vx, \ve, t} \left[ \| f(\vu_t) - (c(t)\vx + d(t)\ve) \|^2 \right].
    \label{eq:mse_loss}
\end{equation}

In the standard diffusion paradigm, $f$ is explicitly conditioned on the noise level $t$, i.e. it has the form\footnote{While a more accurate notation would have been $f(\vu, t)$, with some abuse of notation we use $f_t(\vu)$ to denote a function of $\vu$ and $t$ to keep the notation consistent with the autonomous models.} $f_t(\vu)$. The minimizer of the MSE loss is then the conditional expectation of the target:
\begin{equation}
    f^*_t(\vu) = \E_{\vx, \ve | \vu, t} [c(t)\vx + d(t)\ve].
\end{equation}
This function defines a time-dependent vector field that guides the generation process.

In this work, however, we focus on autonomous models where the network $f(\vu)$ receives only the noisy observation $\vu$, with no access to $t$. The minimizer of the MSE loss for such a ``noise-agnostic'' model is the \textit{posterior expectation} of the target (Lemma~\ref{lemma:optimal_target} in Appendix~\ref{sec:optimal_autonomous_target}):
\begin{equation}
    f^*(\vu) = \E_{t|\vu} \left[ \E_{\vx, \ve | \vu, t} [c(t)\vx + d(t)\ve] \right] = \E_{t|\vu} [f^*_t(\vu)].
\end{equation}
This has a very intuitive interpretation: \textit{the optimal autonomous model is a time-average of the optimal conditional model with respect to the posterior $p(t|\vu).$}

By defining the optimal conditional denoiser as $D_t^*(\vu) = \E[\vx | \vu, t]$, we can expand this target as (Lemma~\ref{lemma:denoiser_formulation} in Appendix \ref{sec:optimal_autonomous_target}):
\begin{equation}
    f^{*}(\vu) = \mathbb{E}_{t|\vu} \left[ \frac{d(t)}{b(t)}\vu + \left( c(t) - \frac{d(t)a(t)}{b(t)} \right) D_{t}^{*}(\vu) \right].
    \label{eq:autonomous_target_expansion}
\end{equation}
Specific choices for the coefficients $a(t), b(t), c(t), d(t)$ yield standard architectures, as summarized in Table~\ref{tab:unified_schedules}. In the table, DDPM formulation uses the so called variance preserving diffusion processes where coefficients $a(t)$ and $b(t)$ satisfy $a^2(t) + b^2(t) = 1$ and $a(t) =\sqrt{\bar{\alpha}_t}$ is defined via a discrete diffusion process that determine the form of $\bar{\alpha}_t$ \cite{ho2020denoising,nichol2021improved}.

\emph{The central question of this paper is to understand the geometric and dynamical consequences of replacing the precise conditional field $f^*_t(\vu)$ with this autonomous posterior average $f^*(\vu)$. Does this time-invariant field still define a valid generative trajectory?}

In the following sections, we analyze its alignment with an energy landscape (Section~\ref{sec:marginal_energy}) and deep dive into its properties when used as a generative model (Sections~\ref{sec:riemannian_flow} and \ref{sec:stability}).

\begin{table*}[t]
    \centering
    \caption{Unified coefficients and simplification of the general autonomous field for common generative models: DDPM~\cite{ho2020denoising}, EDM~\cite{karras2022elucidating}, Flow Matching (FM)~\cite{lipman2023flow}, and Equilibrium Matching (EqM) ~\cite{wang2025equilibrium}.}
    \label{tab:unified_schedules}
    \renewcommand{\arraystretch}{2.2}
    \begin{tabular}{lccccc}
        \toprule
        \textbf{Model} & $a(t)$ & $b(t)$ & $c(t)$ & $d(t)$ & \textbf{Autonomous field} $f^{*}(\vu)=\mathbb{E}_{t|\vu} \left[f^*_t(\vu)\right]$ \\
        \midrule
        \textbf{DDPM} & $\sqrt{\bar{\alpha}_t}$ & $\sqrt{1-\bar{\alpha}_t}$ & 0 & 1 & $\mathbb{E}_{t|\vu} \left[ \frac{\vu - \sqrt{\overline{\alpha}_{t}}D_{t}^{*}(\vu)}{\sqrt{1-\overline{\alpha}_{t}}} \right] = \mathbb{E}_{t|\vu}[\epsilon_{t}^{*}(\vu)]$ \\
        \textbf{EDM} & 1 & $\sigma_t$ & 1 & 0 & $\mathbb{E}_{t|u} \left[ D_{t}^{*}(\vu) \right]$ \\
        \textbf{FM} & $1-t$ & $t$ & $-1$ & $1$ & $\mathbb{E}_{t|\vu} \left[ \frac{\vu - D_{t}^{*}(\vu)}{t} \right]$ \\
        \textbf{EqM} & $1-t$ & $t$ & $-t$ & $t$ & $\mathbb{E}_{t|u} \left[ \vu - D_{t}^{*}(\vu) \right]$ \\
        \bottomrule
    \end{tabular}
\end{table*}

\section{The Geometry of the Marginal Energy}
\label{sec:marginal_energy}

Standard diffusion models rely on a time-dependent score function, $\nabla_\vu \log p(\vu|t)$, which explicitly guides the trajectory at every noise level. In contrast, \textbf{autonomous models} (such as Equilibrium Matching or ``blind'' diffusion) must compress these dynamics into a single, static vector field $f^*(\vu)$ that is independent of time.

This fundamental difference raises a critical geometric question: \textit{Does this static field align with the gradient of a global potential energy?} If such a potential exists, \textbf{autonomous generation} could be theoretically grounded as a form of energy minimization \cite{lecun2006tutorial}.

The most natural candidate for this potential is the \textit{marginal energy} $\Emarg(\vu)$, defined as the negative log-likelihood of the marginal data density $p(\vu) = \int p(\vu|t)p(t)dt$:
\begin{equation}
    \Emarg(\vu) = - \log p(\vu).
\end{equation}
To determine if the learned field $f^*(\vu)$ aligns with this energy, we must first derive its gradient. By differentiating the marginal likelihood mixture, we find that the gradient of the marginal energy is the posterior expectation of the conditional scores:
\begin{equation}
    \nabla_\vu \Emarg(\vu) =\E_{t|\vu}[-\nabla_\vu \log p(\vu|t)].
\end{equation}
We refer the reader to Lemma~\ref{lemma:marginal_grad} for the proof. To evaluate this expectation, we use Tweedie's formula \cite{robbins1992empirical,efron2011tweedie} to express the conditional score in terms of the optimal denoiser:
\begin{equation}
    \nabla_\vu \log p(\vu|t) = \frac{a(t)D_t^*(\vu) - \vu}{b(t)^2}.
\end{equation}
Substituting this directly into the posterior expectation yields the explicit form of the marginal energy gradient:
\begin{equation}
    \nabla_\vu \Emarg(\vu) = \E_{t|\vu} \left[ \frac{\vu - a(t) D_t^*(\vu)}{b(t)^2} \right].
    \label{eq:marginal_grad}
\end{equation}
This result establishes the link between the static geometry and the dynamic denoising process. However, it also exposes a critical flaw in the landscape itself.

\subsection{The Energy Paradox}
A key requirement for generative modeling is that the learned vector field $f^*(\vu)$ must be consistent with the clean data support. Depending on the specific formulation, the field at the boundary ($t \to 0$) generally behaves in one of two ways:

\paragraph{Case 1: Attractors (EqM, EDM).} For equilibrium-based models, the target is the data itself or a restoration term. Here, the ideal field must vanish at the clean data ($f^*(\vx) = \mathbf{0}$) to create a stable fixed point.

\paragraph{Case 2: Transversal Flows (Flow Matching).} For transport-based models, the target is a velocity vector (e.g., $\vx_1 - \vx_0$). At the data, the field does not vanish but converges to a finite, non-zero velocity vector ($f^*(\vx) \approx \mathbf{0} - \vx$) that ensures the trajectory intersects the data manifold at the correct time.

\paragraph{The Singularity.}
Regardless of whether the field acts as an attractor or a transversal flow, the model faces a \textit{geometric singularity}. As established in Appendix~\ref{sec:proof_concentration}, the posterior $p(t|\vu)$ collapses near the data manifold. The term inside the expectation in Equation \eqref{eq:marginal_grad} has a singularity as $t\rightarrow 0$. Consequently, the marginal energy forms an infinitely deep potential well ($\Emarg \to -\infty$, see Figure~\ref{fig:marginal_energy}), causing the associated gradient field to diverge:
\begin{equation}
    \lim_{\vu \to \vx_k} \| \nabla_\vu \Emarg(\vu) \| = \infty.
\end{equation}
This creates a puzzle: how can a neural network learn a bounded vector field (which must be finite at the data) that aligns with a geometry defined by such a singular potential? 

One might argue that in practice, training is stabilized by truncating the noise level at some $t_{min} > 0$ (the ``ill-conditioned regime''). However, this does not resolve the geometric paradox; it merely converts a mathematical singularity into an extremely stiff optimization landscape where Hessian eigenvalues scale as $1/t_{min}^2$. Whether strictly singular or merely ill-conditioned, the raw energy landscape essentially forbids stable gradient descent.

In the next section, we resolve this puzzle by showing that noise-blind models do not follow the raw energy gradient, but rather a \textit{Riemannian gradient flow} that perfectly preconditions this singularity.

\begin{figure}
\centering
\includegraphics[width=.85\textwidth]{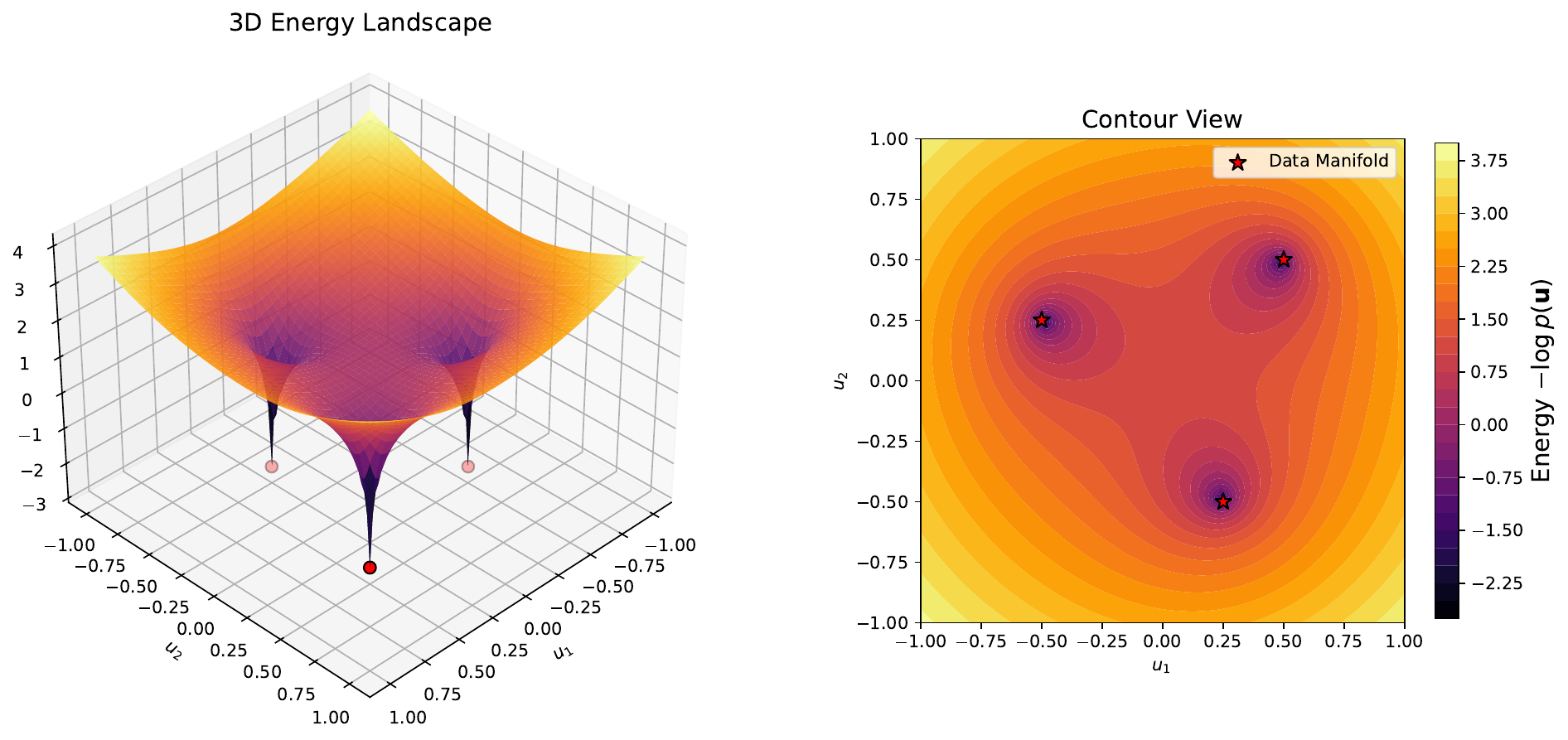}
\caption{\textbf{The Singular Geometry of the Marginal Energy Landscape.} (Left) 3D Energy Landscape: A visualization of the marginal energy $E_{marg}(u) = -\log p(u)$. The landscape reveals an infinitely deep potential well at the data manifold, where the energy diverges to $-\infty$.
(Right) Contour View: Top-down perspective showing the energy concentration around discrete data points (stars). While the raw gradient $\nabla_u E_{marg}(u)$ becomes singular as $u$ approaches the clean data, in this work we prove that autonomous models remain stable by implicitly implementing a Riemannian gradient flow. In this framework, the posterior noise variance acts as a local conformal metric that preconditions and perfectly counteracts the geometric singularity.}
\label{fig:marginal_energy}
\end{figure}

\section{Autonomous Generation as Riemannian Gradient Flow}
\label{sec:riemannian_flow}

We resolve the paradox of stable convergence despite divergent gradients by showing that autonomous models implement a Riemannian gradient flow. We show that the learned vector field $f^*(\vu)$ is structurally identical to the natural gradient of the marginal energy, but with a critical correction term that dominates only when geometric concentration fails.

\subsection{The Anatomy of the Autonomous Field}

Recall from Lemma~\ref{lemma:denoiser_formulation} that the optimal autonomous field is given by:
\begin{equation}
    f^{*}(\vu) = \mathbb{E}_{t|\vu} \left[ \frac{d(t)}{b(t)}\vu + \frac{c(t)b(t) -d(t)a(t)}{b(t)} D_{t}^{*}(\vu) \right].
    \label{eq:field_recall}
\end{equation}
This formulation reveals that for any affine schedule, the autonomous vector field is driven by two competing forces:
\begin{enumerate}
    \item \textbf{The Repulsive Expansion} ($\frac{d(t)}{b(t)}\vu$): A linear term scaling with the noise geometry. Assuming standard diffusion signs ($d, b > 0$), this pushes trajectories outward, accounting for the expanding volume of the noise distribution.
    \item \textbf{The Relative Restoration}: A denoising term pulling the sample towards high-density data regions. Its magnitude is modulated by the signal-to-noise ratio (defined in Equation \eqref{eq:SNR}) and the specific target coefficients $c(t), d(t)$.
\end{enumerate}

\paragraph{The Energy-Aligned Decomposition.}
To understand how these forces align with the geometry of the marginal energy $\Emarg$, we apply the properties of covariance to Eq. \eqref{eq:field_recall}. As derived in Appendix \ref{app:covariance_decomposition}, the vector field decomposes into exactly three geometric components:
\begin{equation}
    f^{*}(\vu) = \underbrace{\overline{\lambda}(\vu)\nabla \Emarg(\vu)}_{\text{Natural Gradient}} + \underbrace{\mathbb{E}_{t|u} \left[\left(\lambda(t) - \overline{\lambda}(\vu)\right)\left(\nabla E_{t}(\vu) -\nabla \Emarg(\vu) \right)\right]}_{\text{Transport Correction}} + \underbrace{\overline{c}_{scale}(\vu)\vu}_{\text{Linear Drift}},
    \label{eq:exact_covariance_decomposition}
\end{equation}
where $\overline{c}_{scale}(\vu) \triangleq \mathbb{E}_{t|u} [ c(t)/a(t) ]$ is the mean drift, $\lambda(t)$ is the \textit{effective gradient gain} and $\overline{\lambda}(\vu)$ is its time average defined as:
\begin{equation}
    \lambda(t) \triangleq \frac{b(t)}{a(t)} \left( d(t)a(t) - c(t)b(t) \right), \quad \overline{\lambda}(\vu) \triangleq \mathbb{E}_{t|u} [\lambda(t)].
\end{equation}
This decomposition (Eq.\ref{eq:exact_covariance_decomposition}) resolves the paradox by isolating the model's behavior into interpretable geometric terms. The field is a Riemannian flow \cite{amari1998natural,ambrosio2005gradient} (via the gain $\lambda$) modified by a Transport Correction (covariance) term. 

Importantly, this formulation exposes the mechanism of stability. While both the marginal energy gradient $\nabla \Emarg$ and the conditional energy gradient $\nabla E_t$ become singular near the manifold (diverging as $O(1/b(t))$), as shown in detail in Appendix \ref{sec:asymptotics}, the effective gain $\lambda(t)$ acts as a \textit{perfect preconditioner}. It vanishes at a rate that exactly counteracts the divergence of the gradients, ensuring the product remains bounded.

In what follows, we show that the correction term vanishes in two key asymptotic regimes: \textit{global high-dimensional concentration} (Sec. \ref{sec:concentration_main}) and \textit{local near-manifold proximity} (Sec. \ref{sec:geometric_regimes}). In both limits, the posterior $p(t|\vu)$ concentrates, simplifying the dynamics to a pure, preconditioned natural gradient flow.

\subsection{Regime I: Global Concentration in High Dimensions}
\label{sec:concentration_main}

A central mystery of autonomous models is how a single vector field can ``know'' which noise level to apply to a given input without explicit conditioning. We resolve this by observing that in high-dimensional spaces ($D \gg 1$), provided the data resides on a low-dimensional manifold ($d \ll D$), the noise level $t$ is not truly hidden; it is globally encoded in the geometry of the observation $\vu$.

In these high-dimensional settings, the mass of a Gaussian distribution concentrates in a thin spherical shell~\cite{ledoux2001concentration,vershynin2018high}. When the data is low-dimensional, the noisy observation $\vu$ can be decomposed into a component within the data subspace and an orthogonal noise component. Because the codimension is large, the magnitude of this orthogonal noise dominates the total norm. In this regime, the ``shells'' corresponding to different noise levels $b(t)$ become effectively disjoint. For completeness we provide a proof of this in Appendix~\ref{app:concentration_proof}.

As a result, the input $\vu$ becomes a deterministic proxy for the noise level $t$. This geometric structure has two profound consequences for the model's objective:
\begin{itemize}
    \item \textbf{Posterior Concentration:} The model's uncertainty about the noise level vanishes. The posterior $p(t|\vu)$ collapses to a Dirac delta centered at an implicit estimate $\hat{t}(\vu)$.
    \item \textbf{Vanishing Transport Correction:} Because there is no longer a mixture of conflicting noise levels at any given point, the complex interaction between different potential fields disappears. The transport correction term vanishes:
    \begin{equation}
        \mathbb{E}_{t|u} \left[\left(\lambda(t) - \overline{\lambda}(\vu)\right)\left(\nabla E_{t}(\vu) -\nabla \Emarg(\vu) \right)\right] \to \mathbf{0}.
    \end{equation}
\end{itemize}
Consequently, in high dimensions, the field is strictly dominated by the \textbf{Natural Gradient} flow:
\begin{equation}
    f^*(\vu) \approx \overline{\lambda}(\vu)\nabla \Emarg(\vu) + \overline{c}_{scale}(\vu)\vu.
\end{equation}
This resolves the ``blindness'' paradox globally: the model implicitly sees $t$ through the separation of noise scales. 

\subsection{Regime II: Local Stability via Proximity}
\label{sec:geometric_regimes}

While high-dimensional concentration provides a global mechanism, a second, stronger mechanism ensures stability as the trajectory approaches the data, \emph{regardless of dimension}. We analyze the decomposition in the near-manifold limit:

\paragraph{1. The Near-Manifold Regime (Concentration via Proximity)}
As the observation $\vu$ approaches the data support ($\vu \to \mathcal{X}$), the likelihood becomes dominated by the smallest noise scales. This causes the posterior $p(t|\vu)$ to concentrate sharply on $t \rightarrow 0$ simply because the observation is indistinguishable from clean data.

We rigorously prove this in Appendix \ref{sec:proof_concentration} for two cases. In the first case, we assume that the data is discrete and finite. In this case, we show that as we approach a data point, the posterior $p(t|\vu)$ converges weakly to the Dirac measure $\delta(t)$. In the second case, we assume that the data lies on a manifold of dimension $d$ in an ambient space of dimension $D$ where $D-d > 2$. Note that we do not require $D \gg d$ as in Section~\ref{sec:concentration_main}. In this case as in the case for discrete data, we can show the weak convergence of $p(t|\vu)$ as we approach the data manifold.

Notably, this \textit{local concentration} occurs even in low dimensions. Therefore, as we approach the data manifold, the transport correction term in the autonomous field becomes negligible:
\begin{equation}
    \mathbb{E}_{t|u} \left[\left(\lambda(t) - \overline{\lambda}(\vu)\right)\left(\nabla E_{t}(\vu) -\nabla \Emarg(\vu) \right)\right] \to \mathbf{0}.
\end{equation}
In this limit, the raw energy gradients ($\nabla \Emarg$ and $\nabla E_t$) diverge at an $O(1/b(t))$ rate, creating a potential geometric singularity. However, the field remains stable because the effective gain implements a Riemannian preconditioning:
\begin{itemize}
    \item \textbf{Geometric Preconditioning:} The effective gain $\overline{\lambda}(\vu)$ vanishes at a rate that exactly matches the divergence of the gradients ($\nabla \Emarg$ and $\nabla E_t$), neutralizing the infinity.
    \item \textbf{Singularity Absorption:} In transport-based models (e.g., Flow Matching), the linear drift term acts as a counter-force that effectively ``absorbs'' any residual singular component of the energy gradient, resulting in a smooth, finite velocity.
\end{itemize}

\paragraph{2. The High-Noise Regime (Transport Dominated)}
Far from the data manifold, if the dimension $D$ is not sufficiently large to enforce global concentration (Sec. \ref{sec:concentration_main}), the strict proximity cue is absent. Here, the covariance term becomes significant, ``steering'' the trajectories away from the raw energy gradient. This rotation ensures the field satisfies the global transport requirements of the noise schedule before the local geometry takes over.

For the exact low-noise asymptotic derivations and verification that architectures like Equilibrium Matching and Flow Matching satisfy these bounded field conditions, see Appendix~\ref{sec:asymptotics}.

While we showed that the target field is bounded and very close to the optimal conditional field in certain regimes, the dynamics used to generate samples are sensitive to the integrator's step size and coefficients. A bounded target divided by a vanishing noise scale creates a stiff differential equation resulting in instabilities. We discuss this in the next section.

\section{Stability Conditions for Sampling with Autonomous Models}
\label{sec:stability}

While Section~\ref{sec:riemannian_flow} established that the optimal autonomous target $f^{*}(\vu)$ is geometrically well-behaved and acts as an accurate proxy for the conditional field in regimes of concentration (high dimensions or near-manifold), this does not guarantee stable generation. The \textit{dynamics} of the sampling process can amplify small errors into divergent trajectories.

To quantify this, we analyze the sampling process as the integration of a time-dependent velocity field $\vv(\vu, t)$:
\begin{equation}
    \frac{d\vu}{dt} = \vv_{aut}(\vu, t) \triangleq \mu(t)\vu + \nu(t) f^*(\vu).
\end{equation}
Here, $\mu(t)$ is the drift coefficient of the noise schedule and $\nu(t)$ is the \textbf{effective gain} of the parameterization (derived in Appendix~\ref{sec:stability_details}). Note that even though the autonomous model $f^*(\vu)$ is time-independent, the sampler velocity $\vv_{aut}(\vu, t)$ remains a function of time because the schedule coefficients $\mu(t)$ and $\nu(t)$ vary during integration.

We compare this autonomous velocity against an ideal ``Oracle'' sampler that has access to the exact noise level $t$. The structural stability is determined by the \textit{Drift Perturbation Error} $\Delta \vv$, which measures the deviation caused by substituting the conditional target with the autonomous approximation:
\begin{align}
    \vv_{orc}(\vu, t) &= \mu(t)\vu + \nu(t) f^*_t(\vu) \\
    \vv_{aut}(\vu, t) &= \mu(t)\vu + \nu(t) f^*(\vu)
\end{align}
Subtracting the two eliminates the linear term, isolating the error introduced by the target parameterization:
\begin{equation}
    \Delta \vv(\vu, t) \triangleq \| \vv_{aut}(\vu,t) - \vv_{orc}(\vu, t) \| = \underbrace{|\nu(t)|}_{\text{Gain}} \cdot \underbrace{\| f^*(\vu) - f^*_t(\vu) \|}_{\text{Estimation Error}}.
    \label{eq:drift_error_main}
\end{equation}
This decomposition reveals that stability is a race condition as $t \to 0$: the estimation error (posterior uncertainty) naturally tends to zero, but the effective gain $\nu(t)$ may diverge. We analyze this competition for three standard parameterizations. Detailed derivations can be found in Appendix~\ref{sec:stability_details}:

\begin{itemize}
    \item \textbf{Noise Prediction (DDPM/DDIM):} The effective gain scales inversely with the noise standard deviation ($\nu(t) \propto 1/b(t)$). As $t \to 0$, this singularity amplifies the finite \textit{``Jensen Gap''} as defined in Equation \eqref{eq:amplification_factor}—the mismatch between the harmonic mean of noise levels and the true noise level—causing the error to diverge ($\lim \Delta \vv \to \infty$).
    
    \item \textbf{Signal Prediction (EDM):} The gain contains a stronger singularity ($\nu(t) \propto 1/b(t)^2$). However, the error in the signal estimator vanishes \textit{exponentially} fast near the discrete data manifold. This rapid convergence counteracts the polynomial divergence of the gain, resulting in a stable flow ($\lim \Delta \vv \to 0$).
    
    \item \textbf{Velocity Prediction (Flow Matching):} The update is identity-mapped with a bounded gain ($\nu(t) = 1$). There are no singular coefficients to amplify errors. The dynamics absorb posterior uncertainty into a bounded effective drift, making this parameterization inherently stable.
\end{itemize}

Table~\ref{tab:stability_summary} summarizes these regimes. It is important to note that this analysis identifies \textit{sufficient conditions for instability}. A divergence ($\Delta \vv \to \infty$) guarantees failure, whereas a bounded error is a necessary (but not strictly sufficient) condition for high-fidelity generation. Our results prove that velocity-based parameterizations satisfy this necessary condition, whereas noise prediction structurally fails for autonomous models.

\begin{table}[h]
    \centering
    \caption{Summary of Stability Analysis for Autonomous Models. The Drift Perturbation Error is the product of the Effective Gain $\nu(t)$ and the estimation error. Detailed derivations are provided in Appendix~\ref{sec:stability_details}.}
    \label{tab:stability_summary}
    \renewcommand{\arraystretch}{1.5}
    \begin{tabular}{lccc}
        \hline
        \textbf{Parameterization} & \textbf{Effective Gain} $\nu(t)$ & \textbf{Error Mechanism} & \textbf{Stability} \\
        \hline
        Noise ($\ve$) & $O(1/b(t))$ & Amplified Jensen Gap & \color{red}\textbf{Unstable} \\
        Signal ($\vx$) & $O(1/b(t)^2)$ & Exp. Decay vs. Poly. Div. & \color{green!60!black}\textbf{Stable} \\
        Velocity ($\vv$) & $\mathbf{1}$ (Bounded) & Bounded Drift & \color{green!60!black}\textbf{Inherently Stable} \\
        \hline
    \end{tabular}
\end{table}

\section{Empirical Verification}
\label{sec:empirical_verification}

To validate the theoretical stability conditions derived in Section~\ref{sec:stability}, we conducted experiments on CIFAR-10, SVHN and Fashion MNIST datasets. The primary objective was to determine if the predicted structural instability of autonomous noise-prediction models (DDPM Blind) manifests in standard image benchmarks, and whether velocity-based parameterizations (Flow Matching) can resolve this paradox without explicit noise conditioning.

\subsection{Experimental Setup}
We trained four model configurations using a ResNet-based U-Net architecture. All models were trained for 10,000 steps using EMA=0.999 and batch size=128.
\begin{itemize}
\item \textbf{DDPM Blind (Autonomous):} A noise-prediction model where time-level conditioning is removed.
   \item \textbf{DDPM Conditional:} The standard baseline utilizing explicit time embeddings.
   \item \textbf{Flow Matching Blind (Autonomous):} A velocity-parameterized model ($v = \dot{u}$) without noise-level conditioning.
    \item \textbf{Flow Matching Conditional:} A velocity-based model with explicit $t$ conditioning.
\end{itemize}

\paragraph{Findings: Parameterization and Stability.} The generative results align with our theoretical stability analysis. 

\begin{itemize}
    \item \textbf{Unstable Noise Prediction:} As predicted in Section \ref{sec:stability}, the \textit{DDPM Blind} model fails to generate coherent samples. The resulting images are dominated by high-frequency artifacts and residual noise, confirming that the $O(1/b(t))$ gain singularity in noise-prediction acts as an amplifier for estimation errors.
    \item \textbf{Stable Velocity Flows:} In contrast, the \textit{Flow Matching Blind} model produces sharp samples qualitatively similar to its conditional counterpart. Because the gain remains bounded ($\nu(t)=1$), the dynamics absorb posterior uncertainty into a stable effective drift.
\end{itemize}

These findings demonstrate that while the marginal energy landscape contains a fundamental singularity, velocity-based architectures remain stable by implicitly implementing a Riemannian gradient flow that preconditions the landscape.

\begin{figure}[h]
    \centering
    \includegraphics[width=.8\textwidth]{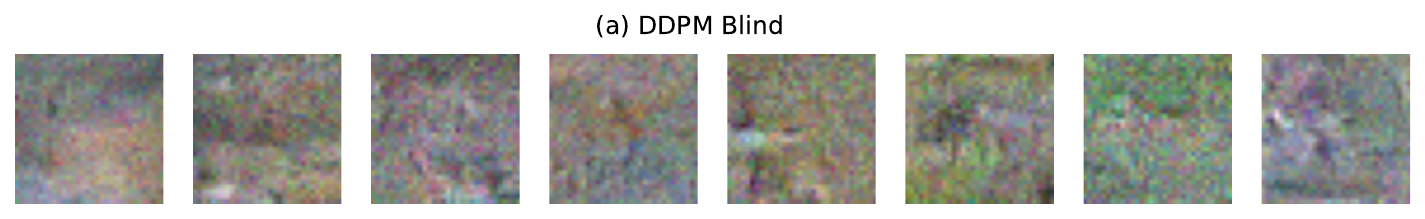}
    \includegraphics[width=.8\textwidth]{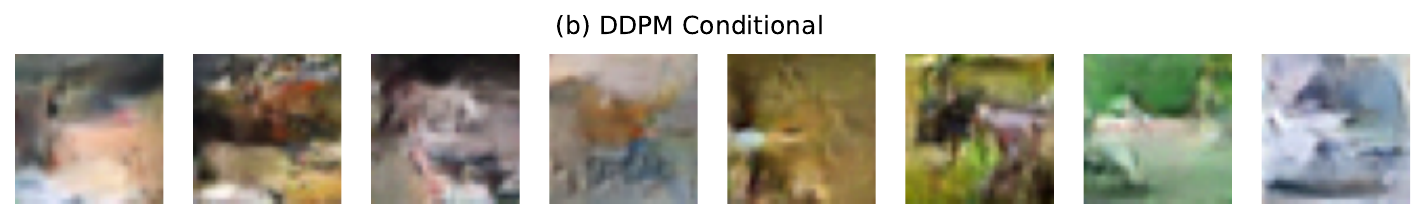}
    \includegraphics[width=.8\textwidth]{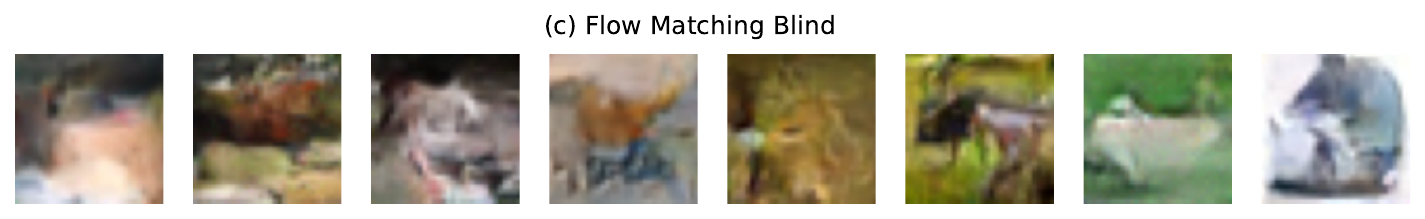}
    \includegraphics[width=.8\textwidth]{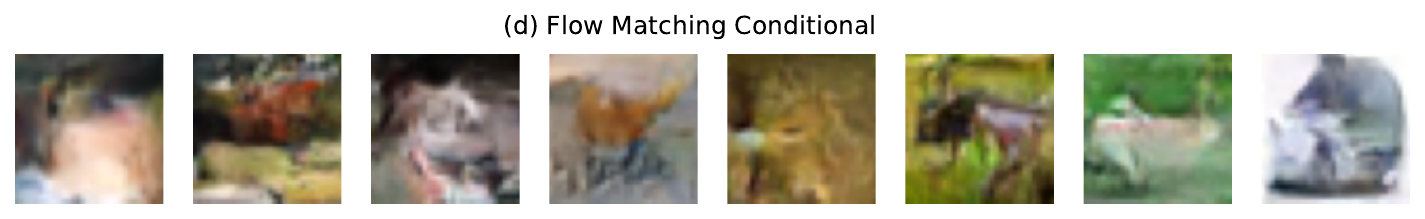}
    \caption{\textbf{Generative performance on CIFAR-10.} Top: DDPM Blind exhibits structural instability and noise. Bottom: Flow Matching Blind achieves stable generation, matching the performance of conditioned models.}
    \label{fig:cifar_results}
\end{figure}

\begin{figure}[h]
    \centering
    \includegraphics[width=.8\textwidth]{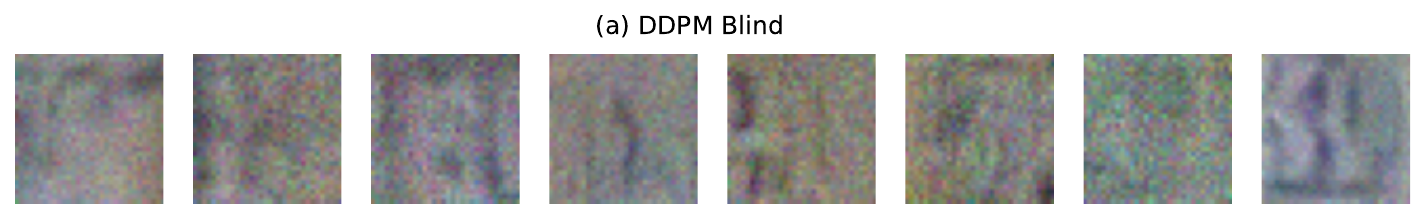}
    \includegraphics[width=.8\textwidth]{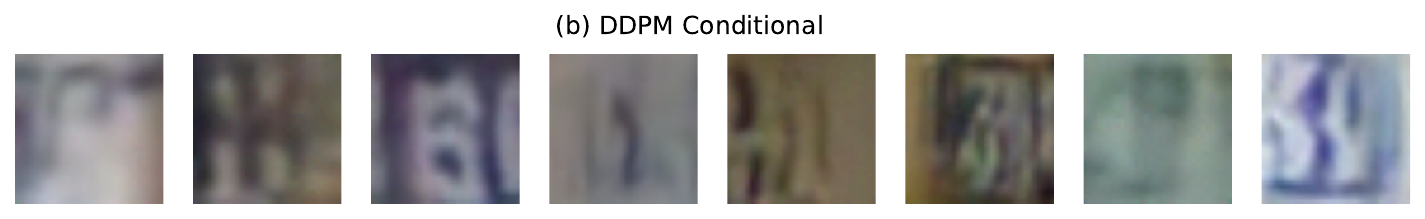}
    \includegraphics[width=.8\textwidth]{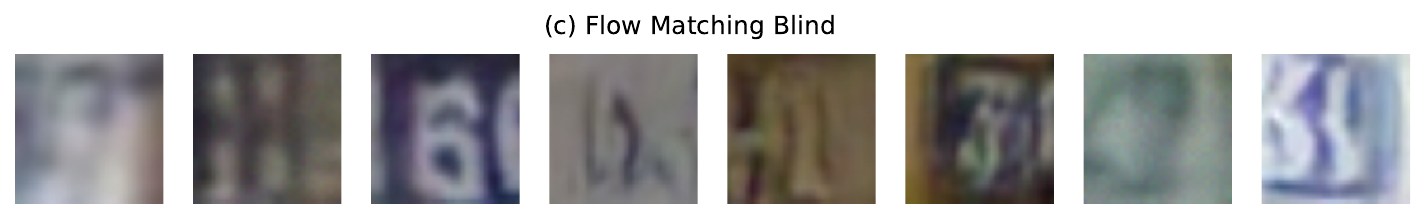}
    \includegraphics[width=.8\textwidth]{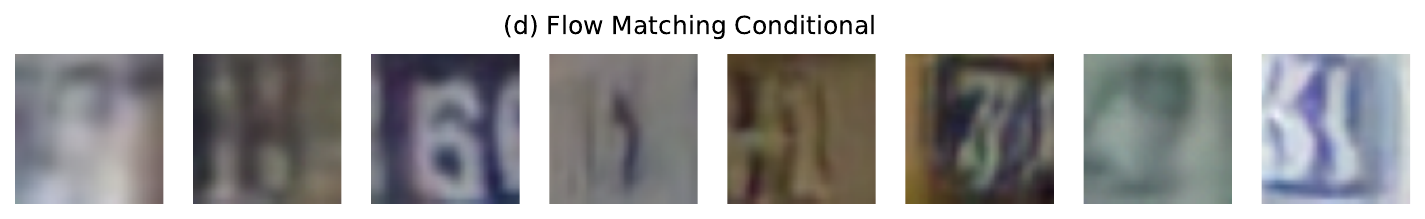}
    \caption{\textbf{Generative performance on SVHN (Street View House Numbers)}. Top: DDPM Blind exhibits structural instability and noise. Bottom: Flow Matching Blind achieves stable generation, matching the performance of conditioned models.}
    \label{fig:svhn_results}
\end{figure}

\begin{figure}[h]
    \centering
    \includegraphics[width=.8\textwidth]{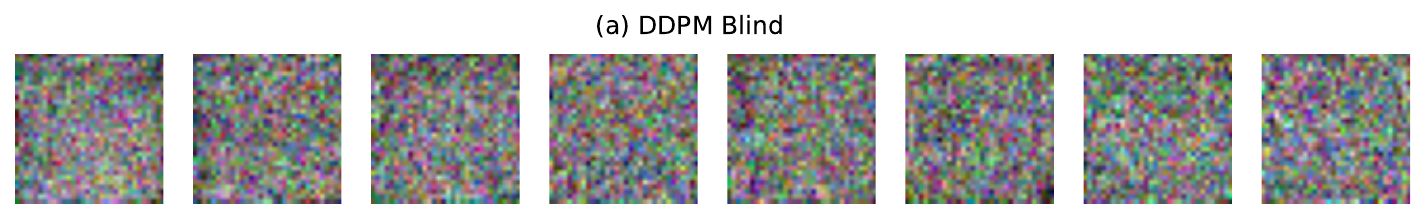}
    \includegraphics[width=.8\textwidth]{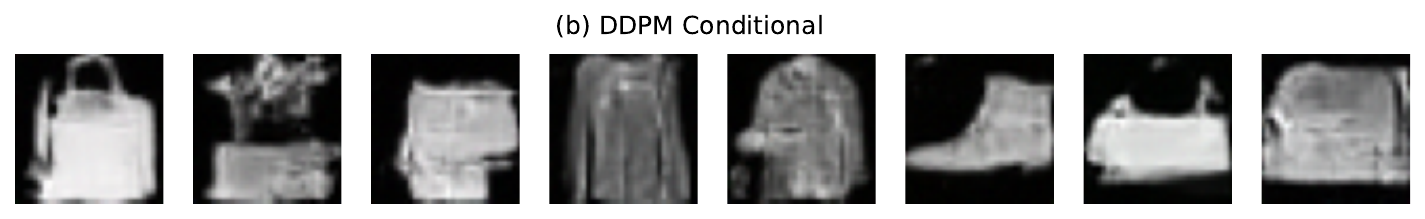}
    \includegraphics[width=.8\textwidth]{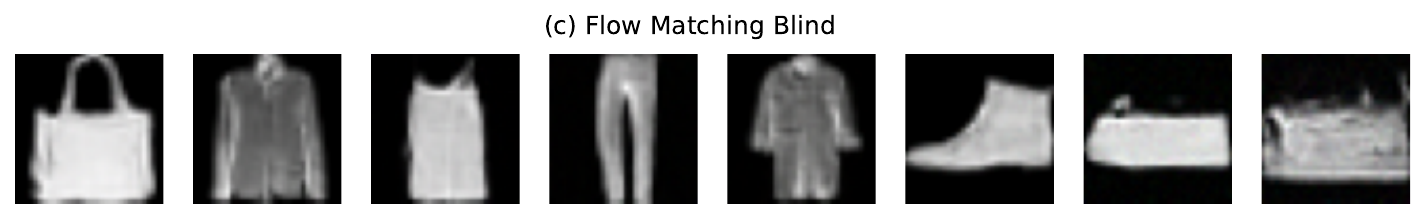}
    \includegraphics[width=.8\textwidth]{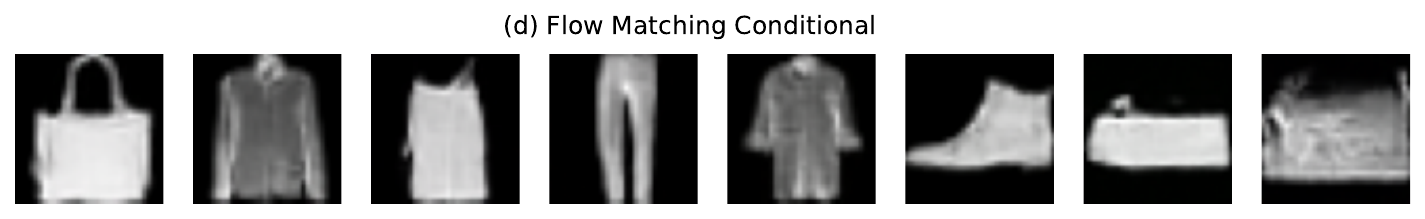}
    \caption{\textbf{Generative performance on Fashion MNIST.}. Top: DDPM Blind exhibits structural instability and noise. Bottom: Flow Matching Blind achieves stable generation, matching the performance of conditioned models.}
    \label{fig:fashion_mnist_results}
\end{figure}

\subsection{The Impact of Dimensionality on Autonomous Generation}
To empirically illustrate how high-dimensional geometry resolves the ambiguity of autonomous generation, we designed a controlled toy experiment motivated by the setup in~\cite{li2025back}. We constructed a 2D concentric circles dataset and embedded it into a high-dimensional ambient space $\mathbb{R}^D$ using a random orthogonal projection matrix $P \in \mathbb{R}^{D \times 2}$, where $P^T P = I$. We trained a standard residual network for both Flow Matching and DDPM under conditional and autonomous (``blind'') settings. For the conditional variants, the network received the true time embedding $t$, whereas for the autonomous variants, $t$ was strictly zeroed out, forcing the network to implicitly infer the noise scale from the spatial coordinates alone.

Figure~\ref{fig:toy-circles} visualizes the generated samples projected back down to the 2D subspace across exponentially increasing ambient dimensions ($D \in \{2, 8, 32, 128\}$). The results highlight three distinct geometric regimes that perfectly mirror our theoretical stability analysis:

\begin{itemize}
\item The low-dimensional ambiguity regime ($D = 2$). In low dimensions, both \textbf{autonomous models} struggle to capture the true distribution. Because the noise shells heavily overlap, the posterior noise distribution $p(t|u)$ is highly ambiguous, resulting in diffuse, noisy sampling. The network lacks the geometric cues necessary to separate noise scales.

\item The parameterization stability regime ($D \in \{8, 32\}$). As the ambient dimension increases, probability mass begins to concentrate into disjoint shells, giving the network implicit cues about the noise scale. In these moderate dimensions, both models successfully begin to resolve the global ring structure. However, the structural stability of the underlying parameterization dictates the precision of the generated samples. Autonomous Flow Matching (FM Blind) leverages its bounded velocity target to smoothly absorb residual posterior uncertainty, resulting in tight, clean concentric circles as early as $D=8$. In contrast, DDPM Blind exhibits noticeably higher variance and background scatter. This empirically demonstrates that the $O(1/b(t))$ gain in noise-prediction architectures acts as an amplifier for residual estimation errors, leading to noisier sampling trajectories before absolute concentration is reached.

\item The absolute concentration regime ($D = 128$). In extreme high dimensions, the geometric concentration becomes so sharp that the posterior $p(t|u)$ effectively collapses to a Dirac delta. Consequently, the network's estimation error of the noise scale vanishes. Because the estimation error drops to zero faster than the DDPM gain diverges, even the structurally unstable DDPM Blind model eventually produces clean, coherent samples.
\end{itemize}

\begin{figure}
\centering
\includegraphics[width=.95\textwidth]{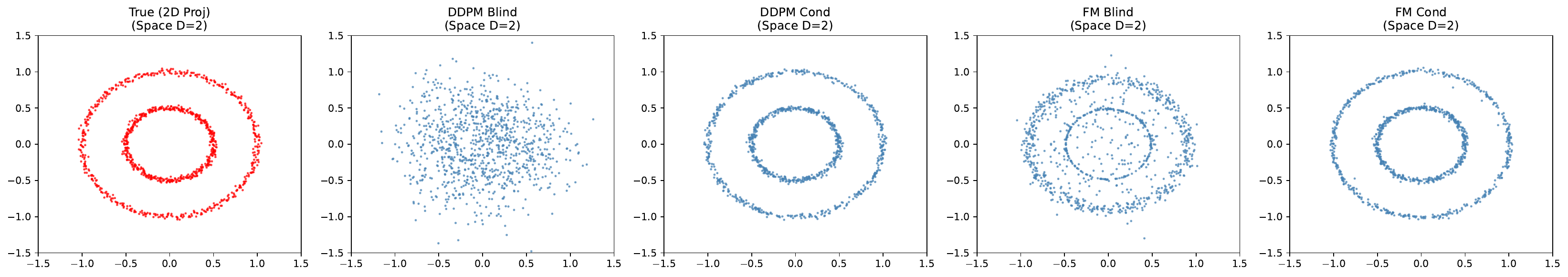}
\includegraphics[width=.95\textwidth]{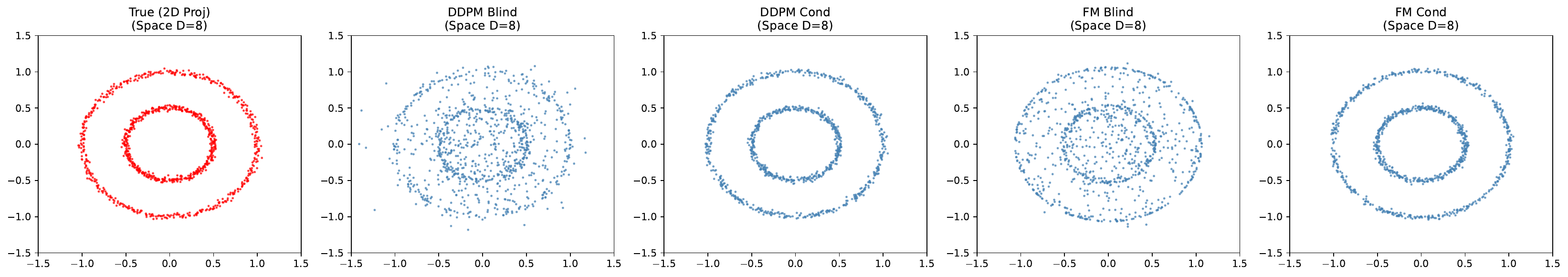}
\includegraphics[width=.95\textwidth]{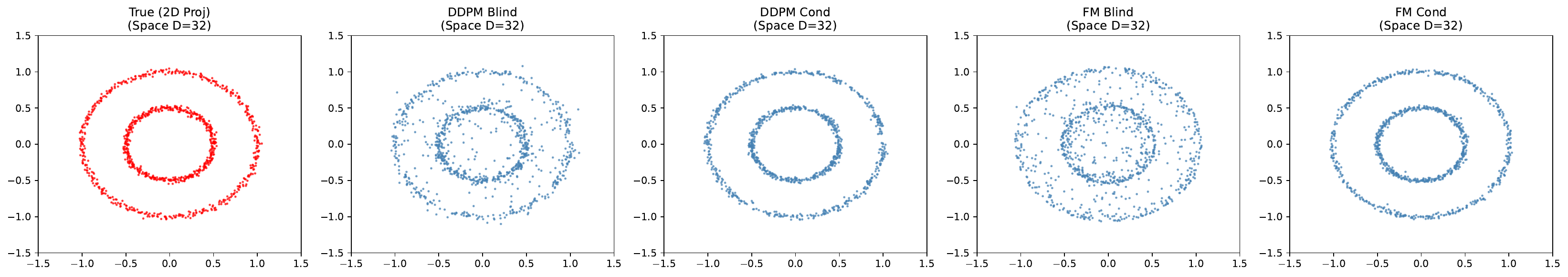}
\includegraphics[width=.95\textwidth]{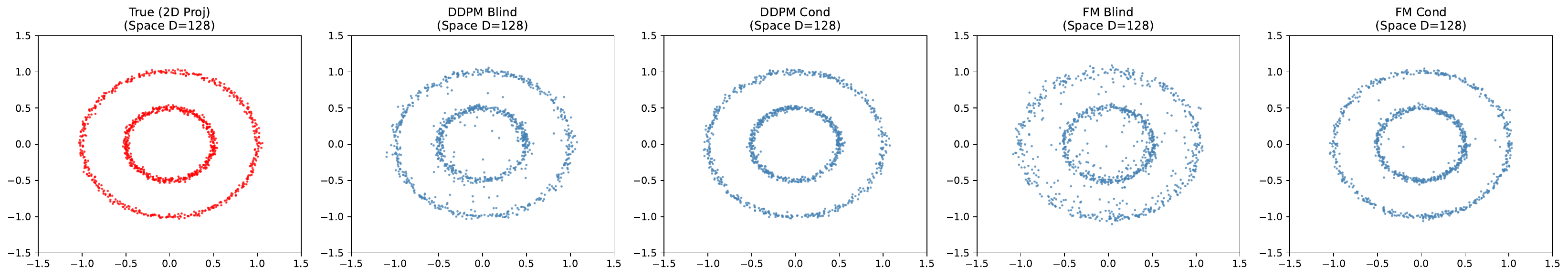}
\label{fig:toy-circles}
\caption{\textbf{Generative performance on a 2D concentric circles dataset embedded in $\mathbb{R}^D$.} Conditional models receive explicit time embeddings, while autonomous (blind) models must implicitly infer the noise scale. At low dimensions ($D=2$, top), blind models fail due to overlapping noise distributions. At moderate dimensions ($D=8, 32$, middle), Flow Matching achieves stable generation due to its bounded velocity parameterization, while DDPM Blind exhibits severe noise artifacts due to its singular gain $O(1/b(t))$. At extreme dimensions ($D=128$, bottom), absolute geometric concentration forces the estimation error to zero, allowing both blind models to converge.}
\end{figure}

\subsection{Experiments with more realistic datasets}
We verify our stability analysis against the benchmark results of \citet{sun2025isnoise}.

\paragraph{Quantitative Results.}
Table \ref{tab:cifar_results} confirms the theory on CIFAR-10. The failure of DDIM (FID 40.90) is not due to a lack of expressivity, but due to the structural instability of the parameterization. Velocity-based models (EqM, uEDM) achieve state-of-the-art performance by ensuring the learned field implicitly incorporates the Riemannian metric discussed in Section \ref{sec:riemannian_flow}.

\begin{table}[h]
\centering
\caption{Generative performance on CIFAR-10 reported by \citet{sun2025isnoise}. Stability correlates perfectly with bounded parameterization.}
\label{tab:cifar_results}
\begin{tabular}{llcc}
\toprule
\textbf{Model} & \textbf{Parameterization} & \textbf{Singularity} & \textbf{FID} (w/o $t$) \\
\midrule
DDIM \citep{song2020score} & Noise ($\epsilon$) & $O(1/b(t))$ & 40.90 \\
Flow Matching \citep{lipman2023flow} & Velocity ($v$) & Bounded & 2.61 \\
\textbf{uEDM \citep{sun2025isnoise}} & \textbf{Velocity ($v$)} & \textbf{Bounded} & \textbf{2.23} \\
\bottomrule
\end{tabular}
\end{table}

\section{Conclusion}

We have identified the marginal energy as the implicit objective of autonomous generative models and proved that its landscape contains a fundamental gradient singularity at the data manifold. We demonstrated that these models effectively implement a Riemannian gradient flow, where the posterior noise variance acts as a local conformal metric that preconditions the singular energy. Finally, we derived the bounded vector field condition, proving that velocity-based parameterizations are mathematically necessary to realize this stable flow in the absence of explicit noise conditioning. By shifting the generative task from time-dependent score matching to time-invariant energy alignment, our work provides a rigorous geometric foundation for the next generation of autonomous and equilibrium-based models.

\section*{Acknowledgments}
The authors would like to thank Ashwini Pokle and Sander Dieleman for helpful discussions.

\bibliography{ref}

@article{sun2025isnoise,
  title={Is Noise Conditioning Necessary for Denoising Generative Models?},
  author={Sun, Qiao and Jiang, Zhicheng and Zhao, Hanhong and He, Kaiming},
  journal={arXiv preprint arXiv:2502.13129},
  year={2025}
}

@article{wang2025equilibrium,
  title={Equilibrium Matching: Generative Modeling with Implicit Energy-Based Models},
  author={Wang, Runqian and Du, Yilun},
  journal={arXiv preprint arXiv:2510.02300},
  year={2025}
}

@inproceedings{ho2020denoising,
  title={Denoising Diffusion Probabilistic Models},
  author={Ho, Jonathan and Jain, Ajay and Abbeel, Pieter},
  booktitle={Advances in Neural Information Processing Systems},
  year={2020}
}

@inproceedings{song2020score,
  title={Score-Based Generative Modeling through Stochastic Differential Equations},
  author={Song, Yang and Sohl-Dickstein, Jascha and Kingma, Diederik P and Kumar, Abhishek and Ermon, Stefano and Poole, Ben},
  booktitle={International Conference on Learning Representations},
  year={2021}
}

@inproceedings{karras2022elucidating,
  title={Elucidating the Design Space of Diffusion-Based Generative Models},
  author={Karras, Tero and Aittala, Miika and Aila, Timo and Laine, Samuli},
  booktitle={Advances in Neural Information Processing Systems},
  year={2022}
}

@article{lipman2023flow,
  title={Flow Matching for Generative Modeling},
  author={Lipman, Yaron and Chen, Ricky TQ and Ben-Hamu, Heli and Nickel, Maximilian and Le, Matt},
  journal={arXiv preprint arXiv:2210.02747},
  year={2023}
}

@inproceedings{sohl2015deep,
  title={Deep Unsupervised Learning using Nonequilibrium Thermodynamics},
  author={Sohl-Dickstein, Jascha and Weiss, Eric and Maheswaranathan, Niru and Ganguli, Surya},
  booktitle={International Conference on Machine Learning},
  year={2015}
}

@article{jordan1998variational,
  title={The variational formulation of the Fokker-Planck equation},
  author={Jordan, Richard and Kinderlehrer, David and Otto, Felix},
  journal={SIAM journal on mathematical analysis},
  volume={29},
  number={1},
  pages={1--17},
  year={1998}
}

@inproceedings{neklyudov2023action,
  title={Action Matching: Learning Stochastic Dynamics from Samples},
  author={Neklyudov, Kirill and Brekelmans, Rob and Severo, Daniel and Makhzani, Alireza},
  booktitle={International Conference on Machine Learning},
  year={2023}
}

@inproceedings{song2019generative,
  title={Generative Modeling by Estimating Gradients of the Data Distribution},
  author={Song, Yang and Ermon, Stefano},
  booktitle={Advances in Neural Information Processing Systems},
  year={2019}
}

@inproceedings{song2020improved,
  title={Improved Techniques for Training Score-Based Generative Models},
  author={Song, Yang and Ermon, Stefano},
  booktitle={Advances in Neural Information Processing Systems},
  year={2020}
}

@article{gnanasambandam2020onesize,
  title={One Size Fits All: Can We Train One Denoiser for All Noise Levels?},
  author={Gnanasambandam, Abhiram and Chan, Stanley H},
  journal={International Conference on Machine Learning},
  year={2020}
}

@article{scarvelis2025closed,
  title={Closed-Form Diffusion Models},
  author={Scarvelis, Christopher and S{\'a}ez de Oc{\'a}riz Borde, Haitz and Solomon, Justin},
  journal={Transactions on Machine Learning Research},
  year={2025}
}

@article{guth2025learning,
  title={Learning normalized image densities via dual score matching},
  author={Guth, Florentin and Kadkhodaie, Zahra and Simoncelli, Eero P},
  journal={Advances in Neural Information Processing Systems},
  year={2025}
}

@inproceedings{du2021improved,
  title={Improved Contrastive Divergence Training of Energy-Based Models},
  author={Du, Yilun and Li, Shuang and Tenenbaum, Joshua and Mordatch, Igor},
  booktitle={International Conference on Machine Learning},
  year={2021}
}

@book{vershynin2018high,
  title={High-dimensional probability: An introduction with applications in data science},
  author={Vershynin, Roman},
  volume={47},
  year={2018},
  publisher={Cambridge university press}
}

@misc{kadkhodaie2026blinddenoisingdiffusionmodels,
      title={Blind denoising diffusion models and the blessings of dimensionality}, 
      author={Zahra Kadkhodaie and Aram-Alexandre Pooladian and Sinho Chewi and Eero Simoncelli},
      year={2026},
      eprint={2602.09639},
      archivePrefix={arXiv},
      primaryClass={cs.LG},
      url={https://arxiv.org/abs/2602.09639}, 
}

@article{li2025back,
  title={Back to basics: Let denoising generative models denoise},
  author={Li, Tianhong and He, Kaiming},
  journal={arXiv preprint arXiv:2511.13720},
  year={2025}
}

@incollection{robbins1992empirical,
  title={An empirical Bayes approach to statistics},
  author={Robbins, Herbert E},
  booktitle={Breakthroughs in Statistics: Foundations and basic theory},
  pages={388--394},
  year={1992},
  publisher={Springer}
}

@article{efron2011tweedie,
  title={Tweedie’s formula and selection bias},
  author={Efron, Bradley},
  journal={Journal of the American Statistical Association},
  volume={106},
  number={496},
  pages={1602--1614},
  year={2011},
  publisher={Taylor \& Francis}
}

@article{vincent2011connection,
  title={A connection between score matching and denoising autoencoders},
  author={Vincent, Pascal},
  journal={Neural computation},
  volume={23},
  number={7},
  pages={1661--1674},
  year={2011},
  publisher={MIT Press}
}

@book{ambrosio2005gradient,
  title={Gradient flows: in metric spaces and in the space of probability measures},
  author={Ambrosio, Luigi and Gigli, Nicola and Savar{\'e}, Giuseppe},
  year={2005},
  publisher={Springer}
}

@article{amari1998natural,
  title={Natural gradient works efficiently in learning},
  author={Amari, Shun-Ichi},
  journal={Neural computation},
  volume={10},
  number={2},
  pages={251--276},
  year={1998},
  publisher={MIT Press}
}

@article{lecun2006tutorial,
  title={A tutorial on energy-based learning},
  author={LeCun, Yann and Chopra, Sumit and Hadsell, Raia and Ranzato, M and Huang, Fujie and others},
  journal={Predicting structured data},
  volume={1},
  number={0},
  year={2006}
}

@book{ledoux2001concentration,
  title={The concentration of measure phenomenon},
  author={Ledoux, Michel},
  number={89},
  year={2001},
  publisher={American Mathematical Soc.}
}

@article{du2019implicit,
  title={Implicit generation and modeling with energy based models},
  author={Du, Yilun and Mordatch, Igor},
  journal={Advances in neural information processing systems},
  volume={32},
  year={2019}
}

@inproceedings{nichol2021improved,
  title={Improved denoising diffusion probabilistic models},
  author={Nichol, Alexander Quinn and Dhariwal, Prafulla},
  booktitle={International conference on machine learning},
  pages={8162--8171},
  year={2021},
  organization={PMLR}
}

@article{dhariwal2021diffusion,
  title={Diffusion models beat gans on image synthesis},
  author={Dhariwal, Prafulla and Nichol, Alexander},
  journal={Advances in neural information processing systems},
  volume={34},
  pages={8780--8794},
  year={2021}
}

@article{liu2022flow,
  title={Flow straight and fast: Learning to generate and transfer data with rectified flow},
  author={Liu, Xingchao and Gong, Chengyue and Liu, Qiang},
  journal={arXiv preprint arXiv:2209.03003},
  year={2022}
}

@article{albergo2025stochastic,
  title={Stochastic interpolants: A unifying framework for flows and diffusions},
  author={Albergo, Michael and Boffi, Nicholas M and Vanden-Eijnden, Eric},
  journal={Journal of Machine Learning Research},
  volume={26},
  number={209},
  pages={1--80},
  year={2025}
}

@article{delbracio2023inversion,
  title={Inversion by direct iteration: An alternative to denoising diffusion for image restoration},
  author={Delbracio, Mauricio and Milanfar, Peyman},
  journal={arXiv preprint arXiv:2303.11435},
  year={2023}
}

@inproceedings{
gao2025diffusion,
title={Diffusion Models and Gaussian Flow Matching: Two Sides of the Same Coin},
author={Ruiqi Gao and Emiel Hoogeboom and Jonathan Heek and Valentin De Bortoli and Kevin Patrick Murphy and Tim Salimans},
booktitle={The Fourth Blogpost Track at ICLR 2025},
year={2025},
url={https://openreview.net/forum?id=C8Yyg9wy0s}
}

@article{dieleman2023perspectives,
  title={Perspectives on diffusion},
  author={Dieleman, Sander},
  journal={Sander Dieleman's Blog},
  year={2023},
  url={https://sander.ai/2023/07/20/perspectives.html},
  note={Published: July 20, 2023}
}
\newpage

\appendix
\part*{Appendices}

\section{General Derivations for Autonomous Models}
\subsection{Derivation of the Optimal Autonomous Target}\label{sec:optimal_autonomous_target}
\begin{lemma}[Optimal Autonomous Target]
\label{lemma:optimal_target}
Consider the loss functional $\mathcal{L}(f)$ defined in Eq \eqref{eq:mse_loss}. The unique global minimizer $f^*(\vu)$ is given by the expectation of the target conditioned on the noise level $t$, weighted by the posterior $p(t|\vu)$:
\begin{equation}
    f^*(\vu) = \E_{t|\vu} \left[ \E_{\vx, \ve | \vu, t} [c(t)\vx + d(t)\ve] \right].
\end{equation}
\end{lemma}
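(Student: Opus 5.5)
The plan is to reduce the functional minimization to the standard fact that, in $L^2$, the conditional expectation is the orthogonal projection onto functions measurable with respect to the conditioning variable. The key observation is that an autonomous $f$ depends only on $\vu$, so the relevant conditioning variable is $\vu$ alone, not $(\vu,t)$. Write the random target as $R = c(t)\vx + d(t)\ve$. The forward model \eqref{eq:unified_forward} together with the priors on $\vx$, $\ve$ and $t$ define a joint law of $(\vx,\ve,t,\vu)$, with $\vu=\vu_t$. Then
\[
\mathcal{L}(f)=\E\bigl[\|f(\vu)-R\|^2\bigr].
\]
I will assume throughout that $\E\|R\|^2<\infty$, so that $\mathcal{L}$ is finite for some $f$; this can be stated as a standing integrability assumption on $p(t)$ and the coefficients.

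The first step is the usual bias--variance split. Set $g(\vu)=\E[R\mid \vu]$ and expand
\[
\|f(\vu)-R\|^2=\|f(\vu)-g(\vu)\|^2+2\langle f(\vu)-g(\vu),\,g(\vu)-R\rangle+\|g(\vu)-R\|^2 .
\]
The cross term has zero expectation, by the tower property: condition on $\vu$, and note that $f(\vu)-g(\vu)$ is $\vu$-measurable while $\E[g(\vu)-R\mid\vu]=0$. This gives
\[
\mathcal{L}(f)=\E\|f(\vu)-g(\vu)\|^2+\mathcal{L}(g),
\]
so $g$ is a minimizer. It is unique up to sets of $p(\vu)$-measure zero, which is the sense in which ``unique'' has to be read, and I will say so explicitly.

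The second step identifies $g$ with the stated formula. I disintegrate the joint conditional law as
\[
p(\vx,\ve,t\mid\vu)=p(t\mid\vu)\,p(\vx,\ve\mid\vu,t),
\]
which is valid wherever $p(\vu)=\int p(\vu\mid t)p(t)\,dt>0$. That positivity holds everywhere as long as $b(t)>0$ on a set of positive $p(t)$-mass, since the Gaussian convolution makes $p(\vu\mid t)>0$. Applying the tower property again, $\E[R\mid\vu]=\E_{t\mid\vu}\bigl[\E_{\vx,\ve\mid\vu,t}[R]\bigr]$. Inside the inner expectation $t$ is fixed, so $c(t)$ and $d(t)$ come out as constants, and this gives exactly $\E_{t|\vu}[f^*_t(\vu)]$.

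The main obstacle is measure-theoretic care rather than algebra. I need existence of regular conditional distributions, which holds on Polish spaces and so is automatic here. I also need square-integrability so that the expansion above is legitimate and the cross term is genuinely integrable; Cauchy--Schwarz handles this once $\E\|R\|^2<\infty$ and $\mathcal{L}(f)<\infty$. Finally there is a possible degeneracy at $t=0$, where $p(\vu\mid t)$ is not a density. I would handle this by noting that $\{t=0\}$ is either $p(t)$-null or can be treated through the general conditional expectation without densities, so the formula still holds $p(\vu)$-almost everywhere.
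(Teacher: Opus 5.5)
Your proposal is correct and follows the paper's route. The paper proves the lemma in one line, ``an application of the Law of Iterated Expectations''; your argument spells that line out: the bias--variance split identifies $\E[R\mid\vu]$ as the $L^2$ minimizer, and the tower property over $t$ turns it into $\E_{t|\vu}[f^*_t(\vu)]$. Your extra care (uniqueness only up to $p(\vu)$-null sets, the square-integrability assumption, and the $p(t)$-null set $\{t=0\}$) correctly supplies what the paper leaves implicit.
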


\begin{proof}
This is an application of the Law of Iterated Expectations.
\end{proof}

\begin{lemma}[Denoiser Formulation]
\label{lemma:denoiser_formulation}
The optimal autonomous target $f^*(\vu)$ can be expressed as an affine transformation of the optimal conditional denoiser $D_t^*(\vu) = \E[\vx | \vu, t]$:
\begin{equation}
    f^{*}(\vu) = \mathbb{E}_{t|\vu} \left[ \frac{d(t)}{b(t)}\vu + \left( c(t) - \frac{d(t)a(t)}{b(t)} \right) D_{t}^{*}(\vu) \right].
\end{equation}
\end{lemma}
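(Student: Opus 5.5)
The plan is to start from Lemma~\ref{lemma:optimal_target}, which gives $f^*(\vu) = \E_{t|\vu}[f^*_t(\vu)]$ with $f^*_t(\vu) = \E_{\vx,\ve|\vu,t}[c(t)\vx + d(t)\ve]$. It therefore suffices to show, for each fixed $t$ with $b(t)>0$, the pointwise identity
\begin{equation*}
    f^*_t(\vu) = \frac{d(t)}{b(t)}\vu + \left( c(t) - \frac{d(t)a(t)}{b(t)} \right) D_t^*(\vu),
\end{equation*}
and then take the outer posterior expectation $\E_{t|\vu}$ of both sides.

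To prove the pointwise identity, use linearity of conditional expectation to write $f^*_t(\vu) = c(t)\E[\vx|\vu,t] + d(t)\E[\ve|\vu,t] = c(t)D_t^*(\vu) + d(t)\E[\ve|\vu,t]$. The key step is to eliminate the noise posterior mean using the forward model \eqref{eq:unified_forward}. Conditional on $t$, the relation $\vu = a(t)\vx + b(t)\ve$ holds almost surely, so $\ve = (\vu - a(t)\vx)/b(t)$. Conditioning on $(\vu,t)$ makes $\vu$ a constant, so
\begin{equation*}
    \E[\ve|\vu,t] = \frac{\vu - a(t)D_t^*(\vu)}{b(t)}.
\end{equation*}
Substituting this and collecting the coefficients of $\vu$ and $D_t^*(\vu)$ gives exactly $\frac{d(t)}{b(t)}\vu + \bigl(c(t) - \frac{d(t)a(t)}{b(t)}\bigr)D_t^*(\vu)$.

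The only real subtlety is the division by $b(t)$, which fails at $t=0$ when $b(0)=0$ (as in FM/EqM). I would handle this by assuming that $p(t)$ puts no mass on $\{t : b(t)=0\}$, or that this set has measure zero under $p(t|\vu)$. The identity then holds for $p(t|\vu)$-almost every $t$, and the outer expectation is unaffected. One also needs enough integrability that the outer expectation of each term is finite. This is a regularity assumption implicit in the lemma, and I would state it explicitly rather than try to prove it. Apart from this, the argument is a routine algebraic rearrangement.
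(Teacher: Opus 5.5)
Your proposal is correct and follows essentially the same route as the paper. Both substitute $\ve = (\vu - a(t)\vx)/b(t)$ into the inner conditional expectation from Lemma~\ref{lemma:optimal_target}, treat $\vu$ as fixed given $(\vu,t)$, and collect the coefficients of $\vu$ and $D_t^*(\vu)$; your remarks about the $p(t|\vu)$-null set $\{t : b(t)=0\}$ and integrability are sensible regularity caveats that the paper leaves implicit.
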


\begin{proof}
Recall the unified forward process $\vu = a(t)\vx + b(t)\ve$. For a fixed observation $\vu$ and noise level $t$, the noise $\ve$ is deterministically related to the clean data $\vx$ by:
\begin{equation}
    \ve = \frac{\vu - a(t)\vx}{b(t)}.
\end{equation}
Substitute this into the inner expectation of Lemma~\ref{lemma:optimal_target}. Note that conditioned on $\vu$ and $t$, the terms $\vu$, $a(t)$, and $b(t)$ are constants, leaving $\vx$ as the only random variable:
\begin{align}
    \E_{\vx, \ve | \vu, t} [c(t)\vx + d(t)\ve] &= \E_{\vx | \vu, t} \left[ c(t)\vx + d(t)\left( \frac{\vu - a(t)\vx}{b(t)} \right) \right] \\
    &= c(t)\E[\vx | \vu, t] + \frac{d(t)}{b(t)}\vu - \frac{d(t)a(t)}{b(t)}\E[\vx | \vu, t].
\end{align}
Identifying $D_t^*(\vu) = \E[\vx | \vu, t]$ and grouping the coefficients for $\vu$ and $D_t^*(\vu)$ yields the result.
\end{proof}

\subsection{Gradient of the Marginal Energy}
\begin{lemma}[Gradient of the Marginal Energy]
\label{lemma:marginal_grad}
Let the marginal likelihood be the mixture $p(\vu) = \int p(\vu|t)p(t)dt$ and the marginal energy be $\Emarg(\vu) = -\log p(\vu)$. The gradient of the marginal energy is the posterior expectation of the conditional energy gradients:
\begin{equation}
    \nabla_\vu \Emarg(\vu) = \E_{t|\vu} \left[ -\nabla_\vu \log p(\vu|t) \right].
\end{equation}
\end{lemma}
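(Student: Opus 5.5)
The plan is to differentiate the mixture representation of the marginal density directly and then recognize Bayes' rule. Write $p(\vu)=\int p(\vu|t)p(t)\,dt$. Since $\Emarg(\vu)=-\log p(\vu)$ and $p(\vu)>0$ (it is a mixture of Gaussian convolutions, hence strictly positive), the chain rule gives
\begin{equation*}
\nabla_\vu \Emarg(\vu) = -\frac{\nabla_\vu p(\vu)}{p(\vu)} = -\frac{1}{p(\vu)}\nabla_\vu \int p(\vu|t)p(t)\,dt .
\end{equation*}
The first substantive step is to pass the gradient inside the integral. I would then apply the log-derivative identity $\nabla_\vu p(\vu|t)=p(\vu|t)\nabla_\vu\log p(\vu|t)$, which holds because each $p(\vu|t)$ is positive and smooth. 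This yields
\begin{equation*}
\nabla_\vu \Emarg(\vu) = -\int \frac{p(\vu|t)p(t)}{p(\vu)}\,\nabla_\vu\log p(\vu|t)\,dt .
\end{equation*}
By Bayes' rule, $p(t|\vu)=p(\vu|t)p(t)/p(\vu)$, so this is exactly $\E_{t|\vu}[-\nabla_\vu\log p(\vu|t)]$.

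The main obstacle is justifying the interchange of $\nabla_\vu$ and $\int dt$, since the conditional densities become singular as $t\to 0$ (where $b(t)\to 0$). I would first try dominated convergence (the Leibniz rule) on a compact neighborhood $K$ of a fixed $\vu$. Here $p(\vu|t)=\E_\vx[\phi_{b(t)}(\vu-a(t)\vx)]$ with $\phi_\sigma$ the $\N(\mathbf 0,\sigma^2\mathbf I)$ density. By Tweedie's formula its gradient is $p(\vu|t)\,(a(t)D_t^*(\vu)-\vu)/b(t)^2$, and in magnitude this is bounded by $\E_\vx\big[\phi_{b(t)}(\vu-a(t)\vx)\,\|\vu-a(t)\vx\|/b(t)^2\big]$.

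Using $r\,e^{-r^2/2\sigma^2}\le C\sigma e^{-r^2/4\sigma^2}$, this bound is at most $C' b(t)^{-D-1}\E_\vx[e^{-\|\vu-a(t)\vx\|^2/4b(t)^2}]$. For $\vu$ off the data support, the exponential kills the polynomial blow-up as $t\to0$, giving an integrable dominating function in $t$ uniformly on $K$. For large $t$, boundedness of $a,b$ away from degenerate values makes the bound trivially integrable against $p(t)$.

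On the data support itself the identity is understood where both sides are finite. There I would state the lemma for $\vu$ with $p(\vu)<\infty$ and note that the interchange holds whenever $\int\|\nabla_\vu p(\vu|t)\|p(t)\,dt<\infty$ locally. I would record this as the standing regularity assumption rather than grinding out sharper conditions.

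With the interchange justified, the remaining steps (chain rule, log-derivative trick, Bayes) are purely algebraic. Combining them with Tweedie's formula immediately gives the explicit form in Eq.~\eqref{eq:marginal_grad}.
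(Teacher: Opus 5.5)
Your argument is correct and matches the paper's proof step for step: the chain rule on $-\log p(\vu)$, differentiating the mixture under the integral sign, the log-derivative identity $\nabla_\vu p(\vu|t)=p(\vu|t)\nabla_\vu\log p(\vu|t)$, and Bayes' rule to recognize $p(t|\vu)$. The only difference is that the paper takes the interchange of $\nabla_\vu$ and $\int dt$ for granted, whereas you justify it by dominated convergence for $\vu$ bounded away from the data support and adopt it as a standing assumption elsewhere; this adds rigor without changing the route.
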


\begin{proof}
By definition, $\nabla_\vu \Emarg(\vu) = -\frac{\nabla_\vu p(\vu)}{p(\vu)}$. We differentiate the mixture integral under the sign:
\begin{equation}
    \nabla_\vu p(\vu) = \nabla_\vu \int p(\vu|t)p(t) dt = \int \nabla_\vu p(\vu|t)p(t) dt.
\end{equation}
We use the log-derivative trick $\nabla_\vu p(\vu|t) = p(\vu|t) \nabla_\vu \log p(\vu|t)$ to rewrite the integrand:
\begin{equation}
    \nabla_\vu p(\vu) = \int p(\vu|t) \nabla_\vu \log p(\vu|t) p(t) dt.
\end{equation}
Dividing by $p(\vu)$ allows us to identify the posterior density $p(t|\vu) = \frac{p(\vu|t)p(t)}{p(\vu)}$:
\begin{align}
    \nabla_\vu \Emarg(\vu) &= - \int \frac{p(\vu|t)p(t)}{p(\vu)} \nabla_\vu \log p(\vu|t) dt \\
    &= \int p(t|\vu) \left[ -\nabla_\vu \log p(\vu|t) \right] dt.
\end{align}
\end{proof}

\subsection{Exact Analytical Forms for Autonomous Fields}
\label{sec:exact_analytical_forms}

To facilitate the exact stability verification in Section~\ref{sec:stability} and Appendix~\ref{sec:stability_details}, we derive the closed-form expressions for the optimal autonomous vector fields. While the neural network approximates these expectations, we can compute them exactly when the data distribution $p_{data}(\vx)$ is known.

\paragraph{General Formulation.}
Recall from Lemma~\ref{lemma:denoiser_formulation} that the optimal autonomous field is an affine transformation of the posterior expectation of the conditional denoiser:
\begin{equation}
    f^*(\vu) = \mathbb{E}_{t|\vu} \left[ \frac{d(t)}{b(t)}\vu + \left(c(t) - \frac{d(t)a(t)}{b(t)}\right) D_t^*(\vu) \right].
\end{equation}
The two key components required to evaluate this are:
\begin{enumerate}
    \item \textbf{The Optimal Conditional Denoiser} $D_t^*(\vu) = \mathbb{E}[\vx|\vu, t]$. By Bayes' rule, this is the center of mass of the posterior $p(\vx|\vu, t) \propto p(\vu|\vx, t)p_{data}(\vx)$.
    \item \textbf{The Posterior Noise Distribution} $p(t|\vu)$. This allows us to average the conditional vector field over all possible noise levels.
\end{enumerate}

\paragraph{Specialization to Discrete Data.}
Let the data manifold be a discrete set $\mathcal{X} = \{\vx_k\}_{k=1}^N$ with uniform prior $p(\vx_k) = 1/N$.

1. \textbf{Conditional Denoiser:} The likelihood of observing $\vu$ given a specific source $\vx_k$ is Gaussian: $p(\vu | \vx_k, t) = \mathcal{N}(\vu; a(t)\vx_k, b(t)^2 I)$. The posterior probability $w_k(\vu, t) \triangleq p(\vx_k | \vu, t)$ is given by the softmax of the negative log-likelihoods:
\begin{equation}
    w_k(\vu, t) = \frac{\exp\left(-\frac{\|\vu - a(t)\vx_k\|^2}{2b(t)^2}\right)}{\sum_{j=1}^N \exp\left(-\frac{\|\vu - a(t)\vx_j\|^2}{2b(t)^2}\right)}.
\end{equation}
The optimal denoiser is the precision-weighted barycenter of the dataset:
\begin{equation}
    D_t^*(\vu) = \sum_{k=1}^N w_k(\vu, t) \vx_k.
    \label{eq:exact_denoiser}
\end{equation}

2. \textbf{Posterior Noise Distribution:} To compute the outer expectation $\mathbb{E}_{t|\vu}[\cdot]$, we require $p(t|\vu)$. The marginal likelihood $p(\vu|t)$ is a Gaussian Mixture Model (GMM) with centers at $a(t)\vx_k$. Assuming a uniform prior on time $p(t) \sim \mathcal{U}(0,1)$, Bayes' rule yields:
\begin{equation}
    p(t|\vu) = \frac{p(\vu|t)}{\int_0^1 p(\vu|\tau) d\tau} \propto \frac{1}{N} \sum_{k=1}^N \mathcal{N}(\vu; a(t)\vx_k, b(t)^2 I).
\end{equation}
Therefore, for the case of discrete data prior, we can evaluate the unconditional field $f^*(\vu)$ by numerically integrating Eq.~\eqref{eq:exact_denoiser} against this posterior $p(t|\vu)$ using high-precision quadrature on a dense grid $t \in [\epsilon, 1]$. This eliminates sampling variance and allows us to isolate the geometric stability properties of the parameterization.

\section{Concentration of $p(t|\vu)$ via Proximity}
\label{sec:proof_concentration}

In this section, we provide a rigorous proof that for autonomous generative models, the posterior distribution $p(t|\vu)$ converges weakly to a Dirac measure $\delta(t)$ as the observation $\vu$ approaches the data support $\mathcal{X}$.

We prove this result for two separate cases. In Case 1, we assume that the data is discrete and finite. In this case, we prove that if the ambient dimension satisfies $D > 2$, regardless of the number of data points, as we approach a data point the posterior $p(t|\vu)$ converges weakly to the Dirac measure $\delta(t)$. In the second case, we assume that the data lies on a manifold, but the dimension of the manifold $d$ and the ambient dimension $D$ satisfy $D-d > 2$. In this case, we show that as we approach the data manifold, the posterior $p(t|\vu)$ again converges weakly to the Dirac measure $\delta(t)$.

We first establish a general lemma for the concentration of the specific family of distributions that arise in our analysis.

\begin{lemma}[Concentration of the Inverse-Gamma Kernel]
\label{lemma:ig_concentration}
Let $q(v; \beta) \propto v^{-\alpha} \exp(-\frac{\beta}{v})$ be a probability density on $v > 0$, with fixed shape $\alpha > 1$ and scale parameter $\beta > 0$. As the scale $\beta \to 0$, the distribution $q(v; \beta)$ converges weakly to a Dirac mass at zero: $q(v; \beta) \xrightarrow{w} \delta(v)$.
\end{lemma}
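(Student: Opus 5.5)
The plan is to exploit the scale structure of the family: $q(\cdot;\beta)$ is the law of $\beta$ times a fixed random variable, so concentration at zero follows from rescaling. First I would identify the normalizing constant. The substitution $s=\beta/v$, with $dv=\beta s^{-2}\,ds$, gives
\begin{equation}
\int_0^\infty v^{-\alpha}e^{-\beta/v}\,dv=\beta^{1-\alpha}\int_0^\infty s^{\alpha-2}e^{-s}\,ds=\beta^{1-\alpha}\,\Gamma(\alpha-1).
\end{equation}
This integral is finite precisely because $\alpha>1$, which is where that hypothesis enters. So $q(v;\beta)=\beta^{\alpha-1}v^{-\alpha}e^{-\beta/v}/\Gamma(\alpha-1)$ is a genuine probability density, namely the inverse-gamma density with shape $\alpha-1$ and scale $\beta$.

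Next I would apply the same substitution to the random variable itself. If $V\sim q(\cdot;\beta)$, then $V=\beta W$, where $W$ has density $q(\cdot;1)$ and this density does not depend on $\beta$. Hence $V\to 0$ in distribution as $\beta\to 0$: for any $\varepsilon>0$,
\begin{equation}
P(V>\varepsilon)=P(W>\varepsilon/\beta)\to 0,
\end{equation}
because $W$ is an almost surely finite random variable. Since $V\ge 0$, this yields weak convergence to $\delta(v)$. Concretely, for bounded continuous $g$ I would split $\E[g(V)]-g(0)$ over the events $\{V\le\eta\}$ and $\{V>\eta\}$. On the first event I use continuity of $g$ at $0$. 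On the second I use $2\|g\|_\infty P(V>\eta)$, which tends to $0$.

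The main obstacle is only that $W$ may have no mean: when $1<\alpha\le 2$, the tail $w^{-\alpha}$ is too heavy for a first moment. For that reason I would avoid Markov or Chebyshev bounds and rely on the tightness argument above, which needs only that $W$ is finite almost surely. When $\alpha>2$, one can alternatively note $\E[V]=\beta/(\alpha-2)\to 0$ and apply Markov's inequality, but the scaling argument covers all $\alpha>1$ uniformly.
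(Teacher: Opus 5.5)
Your proof is correct, and it takes a genuinely different route from the paper's that also proves more. The paper identifies $q(\cdot;\beta)$ as $\mathcal{IG}(\alpha-1,\beta)$, quotes $\E[v]=\beta/(\alpha-2)$ and $\mathrm{Var}(v)=\beta^2/((\alpha-2)^2(\alpha-3))$, and concludes by Chebyshev's inequality. Those moments exist only for $\alpha>2$ and $\alpha>3$ respectively, so the moment argument covers only $\alpha>3$ (or $\alpha>2$ using Markov instead). The lemma, however, claims $\alpha>1$. The gap matters downstream: in the discrete-data lemma the kernel has $\alpha=D/2$ with only $D>2$ assumed, so $D\in\{3,\dots,6\}$ is not covered, and the same happens for codimensions $3$ to $6$ in the manifold case. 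Your representation $V\overset{d}{=}\beta W$, with a $\beta$-free law for $W$, needs only that $W$ is almost surely finite, so it handles every $\alpha>1$, exactly as you anticipated. It also gives a rate: bounding $e^{-1/w}\le 1$ in the tail integral yields $P(V>\varepsilon)\le(\beta/\varepsilon)^{\alpha-1}/\Gamma(\alpha)$ for all $\alpha>1$, whereas the moment route gives the rate $O(\beta)$ only when the mean exists. One further advantage: in the paper's applications the kernel is multiplied by a prior $p(v)$, which breaks the exact scale invariance. Your tail-versus-normalizer reasoning still carries over there. The mass on $[\varepsilon,\infty)$ is at most $\varepsilon^{-\alpha}$, while the normalizer grows like $\beta^{1-\alpha}$ whenever $p$ is bounded below near $0$, so the ratio still tends to zero.
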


\begin{proof}
This density is an Inverse-Gamma distribution $\mathcal{IG}(\alpha-1, \beta)$. The mean and variance are given by $\mathbb{E}[v] = \frac{\beta}{\alpha - 2}$ and $\text{Var}(v) = \frac{\beta^2}{(\alpha - 2)^2 (\alpha - 3)}$ (for $\alpha > 3$).
As $\beta \to 0$, both the mean and the variance vanish. By Chebyshev's inequality, the random variable $v$ converges in probability to 0, which implies weak convergence to $\delta(v)$.
\end{proof}

\subsection{Case 1: Discrete Data Support}

\begin{lemma}
Let the data support be a discrete set $\mathcal{X} = \{\vx_k\}_{k=1}^N$. Let $\vu = \vx_j + \boldsymbol{\delta}$ with $\|\boldsymbol{\delta}\| = \epsilon$. Assume the ambient dimension $D > 2$, the prior $p(t)$ is continuous with $p(0) > 0$, and the noise schedule $b(t)$ is continuous, strictly increasing, with $b(0)=0$. Then as $\epsilon \to 0$, $p(t|\vu) \xrightarrow{w} \delta(t)$.
\end{lemma}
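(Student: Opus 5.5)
Our plan is to write the posterior as $p(t|\vu) \propto p(t)\,\frac{1}{N}\sum_k \mathcal{N}(\vu; a(t)\vx_k, b(t)^2 I)$ and show that, for every fixed $\eta>0$, the mass $\int_\eta^1 p(t|\vu)\,dt$ tends to zero as $\epsilon\to 0$. This is equivalent to weak convergence to $\delta(t)$ on $[0,1]$. We compare a numerator and a denominator.

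\textbf{Step 1 (numerator bounded).} On $[\eta,1]$ we have $b(t)\ge b(\eta)>0$, so each Gaussian density is bounded by $(2\pi b(\eta)^2)^{-D/2}$. Hence $\int_\eta^1 p(\vu|t)p(t)\,dt \le C_\eta$, uniformly in $\vu$. We assume $a$ is continuous; in fact only an upper bound on the densities is needed here.

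\textbf{Step 2 (denominator diverges).} We keep only the $j$-th mixture component and small $t$:
\begin{equation}
\int_0^{\eta} p(\vu|t)p(t)\,dt \;\ge\; \frac{1}{N}\int_0^{\eta} p(t)\,(2\pi b(t)^2)^{-D/2}\exp\!\Big(-\frac{\|\boldsymbol{\delta} + (1-a(t))\vx_j\|^2}{2b(t)^2}\Big)\,dt .
\end{equation}
To handle the mismatch term $(1-a(t))\vx_j$, we assume $a(0)=1$ and $|1-a(t)|\lesssim b(t)$ near $0$. This holds for all schedules in Table~\ref{tab:unified_schedules}: there $1-a(t)=t=b(t)$, or $a\equiv 1$, or $1-\sqrt{\bar\alpha_t}=O(b^2)$. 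Under this assumption the exponent is at least $-\epsilon^2/b^2 - C$. Changing variables to $v=b(t)^2$, with $p(t)\ge p(0)/2$ on a small interval and Jacobian $dt/dv$ bounded below (we assume $b$ is differentiable with bounded derivative near $0$), the lower bound becomes
\begin{equation}
c\int_0^{v_0} v^{-D/2}e^{-\epsilon^2/v}\,dv.
\end{equation}
The substitution $v=\epsilon^2 s$ gives $\epsilon^{2-D}\int_0^{v_0/\epsilon^2} s^{-D/2}e^{-1/s}\,ds$. The integral converges to a finite positive constant exactly when $D/2>1$, that is $D>2$. The denominator is therefore of order $\epsilon^{2-D}\to\infty$. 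This integrand is the inverse-gamma kernel of Lemma~\ref{lemma:ig_concentration} with $\alpha=D/2$ and $\beta=\epsilon^2/2$, so that lemma also directly shows that the mass in $v$ concentrates at $0$.

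\textbf{Step 3 (conclude).} Combining the two bounds gives $\int_\eta^1 p(t|\vu)\,dt \le C_\eta/(c\,\epsilon^{2-D})\to 0$ for each $\eta$, which is the claimed weak convergence.

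\textbf{Main obstacle.} The delicate step is the lower bound in Step 2: controlling the drift $(1-a(t))\vx_j$ and the Jacobian of $t\mapsto b(t)^2$. The stated hypotheses (continuity, monotonicity, $b(0)=0$) do not literally give the rate needed. We would make this explicit by assuming $|1-a(t)|=O(b(t))$ and a lower bound on $b'$ near $0$, or a polynomial behaviour $b(t)\asymp t^{\gamma}$. In the latter case the change of variables yields a kernel $v^{-D/2+(1/(2\gamma)-1)}e^{-\epsilon^2/v}$, and the same divergence argument goes through provided the effective exponent still exceeds $1$.
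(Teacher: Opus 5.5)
Your argument is correct under the hypotheses you add, and it takes a different route from the paper.

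The paper's proof runs as follows:
\begin{enumerate}
\item It changes variables to $v=b(t)^2$ and implicitly sets $a\equiv 1$.
\item It writes $p(v|\vu)$ as a mixture of component posteriors $p_k(v|\vu)\propto v^{-D/2}e^{-\|\vu-\vx_k\|^2/2v}p(v)$, with weights proportional to the evidences $Z_k(\epsilon)$.
\item It shows $Z_j(\epsilon)\sim\epsilon^{2-D}\to\infty$ while $Z_k(\epsilon)$ stays bounded for $k\neq j$, so the weight on component $j$ tends to $1$.
\item It applies the inverse-gamma concentration lemma to $p_j$ with scale $\epsilon^2/2$.
\end{enumerate}
You instead bound the tail mass $\int_\eta^1 p(t|\vu)\,dt$ directly as a ratio. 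The numerator is bounded uniformly because $b\ge b(\eta)$ on $[\eta,1]$, and the evidence is bounded below by a divergent quantity using the $j$-th component alone.

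Your route needs no control of the other evidences and no separate concentration lemma; the paper's moment argument for that lemma in fact requires $\alpha>3$. It also handles a general $a(t)$, which the paper ignores. The paper's route gives more in return: it identifies the limiting posterior as inverse-gamma at scale $v\sim\epsilon^2$, so the inferred noise level tracks the distance to the data. You only get tail mass $O_\eta(\epsilon^{D-2})$.

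Your ``main obstacle'' is genuine, not an artifact of your method. The paper relies on the same kind of regularity when it asserts that $p(v)\ge c_0>0$ near $0$ follows from $p(0)>0$. In fact the lemma as literally stated is false. Take $N=1$, $a\equiv 1$, $D=3$, uniform $p(t)$ on $[0,1]$ and $b(t)=t^{1/6}$. Then $p(\vu|t)\uparrow(2\pi)^{-3/2}t^{-1/2}$, which is integrable, so $p(t|\vu)$ converges to the non-degenerate density $\tfrac12 t^{-1/2}$.

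For $a\equiv 1$ the sharp condition is $\int_0 b(t)^{-D}\,dt=\infty$. Monotone convergence in $\epsilon$ then gives divergence of the evidence directly, with no Jacobian or differentiability assumption. For Lipschitz $b$ this holds for every $D\ge 1$, so the hypothesis $D>2$ matters only for $\sqrt{t}$-type (VP) schedules. Your polynomial variant with $\gamma=1/2$ covers those schedules, but your bounded-derivative variant does not.
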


\begin{proof}
Let $v = b(t)^2$ be the variance. Since $b(t)$ is strictly increasing and continuous with $b(0)=0$, the mapping $t \mapsto v$ is a homeomorphism near zero. Therefore, proving $p(v|\vu) \xrightarrow{w} \delta(v)$ is sufficient to imply $p(t|\vu) \xrightarrow{w} \delta(t)$.

The marginal likelihood of the observation is a mixture of Gaussians:
\begin{equation}
    p(\vu|v) = \frac{1}{N} \sum_{k=1}^N \underbrace{(2\pi v)^{-D/2} \exp\left( -\frac{\|\vu - \vx_k\|^2}{2v} \right)}_{L_k(v)}.
\end{equation}
By Bayes' rule, the posterior $p(v|\vu)$ is a mixture of the individual component posteriors:
\begin{equation}
    p(v|\vu) = \sum_{k=1}^N W_k(\epsilon) p_k(v|\vu),
\end{equation}
where $p_k(v|\vu) = \frac{L_k(v)p(v)}{Z_k(\epsilon)}$ is the normalized component posterior, and $Z_k(\epsilon) = \int_0^\infty L_k(z)p(z)dz$ is the component evidence. The mixing weights are determined by the ratio of component evidences: $W_k(\epsilon) = \frac{Z_k(\epsilon)}{\sum_{i=1}^N Z_i(\epsilon)}$. Note that since $p(t)$ is bounded below near zero, the induced prior $p(v)$ satisfies $p(v) \geq c_0 > 0$ in a neighborhood of $v=0$.

We analyze the asymptotic behavior of the evidence integrals as $\epsilon \to 0$. For the nearest neighbor ($k=j$), the squared distance is $\|\vu - \vx_j\|^2 = \epsilon^2$. The evidence integral is:
\begin{equation}
    Z_j(\epsilon) = \int_0^\infty (2\pi z)^{-D/2} \exp\left(-\frac{\epsilon^2}{2z}\right) p(z) dz.
\end{equation}
Substituting $y = \frac{\epsilon^2}{2z}$ yields $Z_j(\epsilon) \propto (\epsilon^2)^{1 - D/2} \int_0^\infty y^{D/2 - 2} e^{-y} p(\frac{\epsilon^2}{2y}) dy$. For $D > 2$, the integral converges to a finite value bounded away from zero. Consequently, the evidence scales as $Z_j(\epsilon) = \mathcal{O}(\epsilon^{2-D})$. Since $2-D < 0$, $Z_j(\epsilon) \to \infty$ as $\epsilon \to 0$.

In contrast, for any other point $k \neq j$, the distance $\|\vu - \vx_k\|^2 \to \Delta_{jk}^2 > 0$. The evidence integral $Z_k(\epsilon)$ converges to a finite constant $Z_k(0) < \infty$, as the exponential term $\exp(-\Delta_{jk}^2/2z)$ suppresses the singularity at $z=0$.

This divergence of the nearest-neighbor evidence implies that the mixing weights collapse:
\begin{equation}
    W_j(\epsilon) = \frac{1}{1 + \sum_{k \neq j} \frac{Z_k(\epsilon)}{Z_j(\epsilon)}} \to 1.
\end{equation}
Thus, the posterior is asymptotically dominated by the nearest neighbor component $p_j(v|\vu) \propto v^{-D/2} \exp(-\frac{\epsilon^2}{2v}) p(v)$. This is an Inverse-Gamma kernel with scale parameter $\beta = \epsilon^2/2$. As $\epsilon \to 0$, the scale vanishes, and by Lemma \ref{lemma:ig_concentration}, $p(v|\vu) \xrightarrow{w} \delta(v)$.
\end{proof}

\subsection{Case 2: Continuous Low-Dimensional Manifold}

\begin{lemma}
Let data lie on a smooth ($C^2$) $d$-dimensional submanifold $\mathcal{M} \subset \mathbb{R}^D$ with codimension $k = D-d > 2$. Let $r = \text{dist}(\vu, \mathcal{M})$ be the orthogonal distance. Assume $p_{data}$ is continuous and bounded on $\mathcal{M}$. As $r \to 0$, $p(t|\vu) \xrightarrow{w} \delta(t)$.
\end{lemma}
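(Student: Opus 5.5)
I will mirror the argument of Case 1, working in the variance variable $v = b(t)^2$. As before, $t \mapsto v$ is a homeomorphism near zero, so it suffices to show $p(v|\vu) \xrightarrow{w} \delta(v)$. I will treat $a(t)$ as continuous with $a(0)=1$. I will first handle the variance-exploding case $a \equiv 1$, and then absorb the perturbation $a(t)\vx - \vx = O(1-a(t))$ into the distance estimates below, since it vanishes as $v \to 0$.

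The marginal likelihood is
\begin{equation*}
p(\vu|v) = \int_{\mathcal{M}} (2\pi v)^{-D/2} \exp\left(-\frac{\|\vu - \vx\|^2}{2v}\right) p_{data}(\vx)\, d\mathrm{vol}(\vx).
\end{equation*}
Let $\vx_0$ be the (unique, for $r$ small, by the $C^2$ tubular neighbourhood theorem) foot point of $\vu$. Use a local chart $\vx = \vx_0 + (\mathbf{s}, \phi(\mathbf{s}))$ with $\phi(\mathbf{s}) = O(\|\mathbf{s}\|^2)$. This gives the decomposition $\|\vu - \vx\|^2 = r^2 + \|\mathbf{s}\|^2(1 + O(r) + O(\|\mathbf{s}\|))$.

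\textbf{Key step: a Laplace-type estimate.} I split $\mathcal{M}$ into a small chart ball $B_\rho(\vx_0)$ and its complement, and aim to show
\begin{equation*}
p(\vu|v) = (2\pi v)^{-k/2} e^{-r^2/2v}\bigl(p_{data}(\vx_0) + o(1)\bigr) + R(\vu, v),
\end{equation*}
uniformly for $v$ small. On the ball, the Gaussian integral in the $d$ tangential directions cancels $v^{d/2}$ of the normalisation; this uses continuity and boundedness of $p_{data}$. The remainder $R$ comes from points at distance at least $c\rho$, so it is bounded by $C v^{-D/2}e^{-c^2\rho^2/2v}$, which is uniformly bounded in $v$ and $\vu$. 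In effect, $p(\vu|v)$ behaves like a single nearest-neighbour component with $D$ replaced by the codimension $k$. I need $p_{data}(\vx_0) > 0$, or at least locally bounded below; otherwise I would bound from below by the mass of a small ball on $\mathcal{M}$.

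\textbf{Evidence and posterior.} The evidence $\int p(\vu|v)p(v)\,dv$ contains the near term $\int v^{-k/2}e^{-r^2/2v}p(v)\,dv$. The substitution $y = r^2/2v$ from Case 1 shows this scales like $r^{2-k}$, which diverges exactly because $k > 2$. The remainder contributes $O(1)$. Hence the posterior is asymptotically proportional to $v^{-k/2}e^{-r^2/2v}p(v)$ near zero, and its mass on $\{v > \eta\}$ is $O(r^{k-2}) \to 0$ for every fixed $\eta$. This mass bound is already weak convergence to $\delta(v)$. Equivalently, Lemma~\ref{lemma:ig_concentration} applies with $\alpha = k/2 > 1$ and $\beta = r^2/2$.

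\textbf{Main obstacle.} I expect the hard step to be making the Laplace estimate uniform: the curvature errors, the $a(t)$ drift, and the tangential integral must be controlled simultaneously as both $r$ and $v$ tend to zero. I plan to avoid needing exact asymptotics. Instead I will prove only a lower bound $p(\vu|v) \ge c\, v^{-k/2}e^{-r^2/(2v)(1+o(1))}$ for $v \le r$ or $v \le \eta$, together with a crude upper bound $p(\vu|v) \le C(\eta)$ for $v > \eta$. These two bounds suffice for the ratio argument above.
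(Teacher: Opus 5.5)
Your proposal is correct and follows the paper's skeleton: pass to $v=b(t)^2$, expand $\|\vu-\vx\|^2$ around the foot point, and let the Gaussian integral over the $d$ tangent directions cancel $v^{d/2}$. That leaves the codimension-$k$ kernel $v^{-k/2}e^{-r^2/2v}$.

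Where you differ is the closing step. The paper treats this small-$v$ Laplace approximation as if it were the likelihood for all $v$. It also discards the part of $\mathcal{M}$ away from $\vx_{proj}$, declares the posterior to be exactly inverse-gamma, and invokes Lemma~\ref{lemma:ig_concentration}. You instead reuse the Case~1 mechanism. A one-sided lower bound on the near-field likelihood for small $v$ gives evidence $\gtrsim r^{2-k}\to\infty$, which is exactly where $k>2$ enters. Meanwhile $\int_\eta^\infty p(\vu|v)p(v)\,dv\le(2\pi\eta)^{-D/2}$ simply because $p_{data}$ is a probability measure, so $P(v>\eta\mid\vu)=O(r^{k-2})$. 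This buys real rigor. You need no uniformity of Laplace's method in $v$, and the far field is controlled explicitly. You also bypass Lemma~\ref{lemma:ig_concentration}, whose Chebyshev argument needs a finite variance and so, for this kernel, only covers $k>6$.

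Some caveats:
\begin{itemize}
\item Your fallback for $p_{data}(\vx_0)=0$ does not work as stated. A lower bound via the mass of a ball of fixed radius stays bounded as $r\to0$, so it gives no divergence. You would need a quantitative small-ball bound such as $P_{data}(B_\rho(\vx_0))\gtrsim\rho^d$ as $\rho\to0$. Some positivity of $p_{data}$ near the foot point is genuinely necessary for the lemma: if $p_{data}$ vanishes on a neighbourhood of $\vx_0$, the evidence stays bounded and the conclusion fails. The paper assumes this silently when it factors out $p_{data}(\vx_{proj})$.
\item Absorbing $a(t)\neq1$ is not justified merely because $1-a(t)\to0$. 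The perturbation enters the exponent divided by $b(t)^2$, for example through $(1-a)^2\|\vx\|^2/(2b^2)$. So you need something like $1-a(t)=O(b(t))$, which holds for the VP, VE and FM schedules.
\item As in Case~1, your evidence lower bound uses that the induced prior $p(v)$ is bounded below near $v=0$. Whether this holds depends on the schedule.
\end{itemize}
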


\begin{proof}
We analyze the marginal likelihood integral over the manifold $\mathcal{M}$ as a function of the variance $v=b(t)^2$:
\begin{equation}
    p(\vu|v) = \int_{\mathcal{M}} (2\pi v)^{-D/2} \exp\left( -\frac{\|\vu - \vx\|^2}{2v} \right) p_{data}(\vx) d\vx.
\end{equation}
Let $\vx_{proj}$ be the orthogonal projection of the fixed observation $\vu$ onto $\mathcal{M}$. Since $\vu$ is fixed, $\vx_{proj}$ is a constant vector.
We define local coordinates on the manifold centered at $\vx_{proj}$. Let $\mathbf{y} \in \mathbb{R}^d$ represent coordinates in the tangent space $T_{\vx_{proj}}\mathcal{M}$. For points $\vx$ near $\vx_{proj}$, we have the expansion $\vx(\mathbf{y}) = \vx_{proj} + \mathbf{J}\mathbf{y} + O(\|\mathbf{y}\|^2)$.
The squared distance decomposes as:
\begin{equation}
    \|\vu - \vx\|^2 = \|\vu - \vx_{proj} + \vx_{proj} - \vx\|^2 = r^2 + \|\vx_{proj} - \vx\|^2 \approx r^2 + \|\mathbf{y}\|^2.
\end{equation}
(The cross-term vanishes because $\vu - \vx_{proj}$ is orthogonal to the tangent space).

Substituting this into the integral:
\begin{equation}
    p(\vu|v) \approx (2\pi v)^{-D/2} e^{-\frac{r^2}{2v}} \int_{\mathbb{R}^d} e^{-\frac{\|\mathbf{y}\|^2}{2v}} p_{data}(\vx(\mathbf{y})) d\mathbf{y}.
\end{equation}
We apply the Laplace method for asymptotic integrals as $v \to 0$ (which corresponds to the high-likelihood regime). The Gaussian kernel $e^{-\|\mathbf{y}\|^2/2v}$ concentrates mass entirely at $\mathbf{y}=0$ (i.e., $\vx = \vx_{proj}$). Since $p_{data}$ is continuous, we can pull the value $p_{data}(\vx_{proj})$ out of the integral:
\begin{equation}
    \int_{\mathbb{R}^d} e^{-\frac{\|\mathbf{y}\|^2}{2v}} p_{data}(\vx(\mathbf{y})) d\mathbf{y} \approx p_{data}(\vx_{proj}) \int_{\mathbb{R}^d} e^{-\frac{\|\mathbf{y}\|^2}{2v}} d\mathbf{y}.
\end{equation}
The remaining integral is a standard unnormalized Gaussian integral over $d$ dimensions, equal to $(2\pi v)^{d/2}$.
Combining the pre-factors:
\begin{equation}
    p(\vu|v) \propto (2\pi v)^{-D/2} \cdot (2\pi v)^{d/2} \cdot e^{-\frac{r^2}{2v}} = (2\pi v)^{-(D-d)/2} \exp\left( -\frac{r^2}{2v} \right).
\end{equation}
Let $k = D-d$ be the codimension. The likelihood takes the form of an Inverse-Gamma kernel:
\begin{equation}
    p(\vu|v) \propto v^{-k/2} \exp\left( -\frac{r^2}{2v} \right).
\end{equation}
Assuming a flat prior on $v$ near 0 (or bounded $p(v)$), the posterior $p(v|\vu)$ is an Inverse-Gamma distribution $\mathcal{IG}(\alpha, \beta)$ with shape $\alpha = \frac{k}{2} - 1$ and scale $\beta = \frac{r^2}{2}$.
Provided $k > 2$, the shape parameter $\alpha > 0$. As the distance to the manifold $r \to 0$, the scale parameter $\beta \to 0$. By Lemma \ref{lemma:ig_concentration}, the distribution converges weakly to a Dirac mass: $p(v|\vu) \xrightarrow{w} \delta(v)$. This implies $p(t|\vu) \xrightarrow{w} \delta(t)$.
\end{proof}

\section{Posterior Concentration in High Dimensions}
\label{app:concentration_proof}

In this section, we prove that in high dimensions, the posterior distribution of the noise level concentrates sharply, effectively allowing noise-blind models to recover the time signal from the spatial geometry of the observation $\vu$.

\begin{proposition}
Let $\mathbb{V} \subset \mathbb{R}^D$ be a linear subspace of dimension $d < D$. Let $\vu = \vx + \mathbf{n}$ be an observation where $\vx \in \mathbb{V}$ and $\mathbf{n} \sim \mathcal{N}(0, \sigma^2 I_D)$. Assuming an improper flat prior $p(\vx) \propto 1$ on $\mathbb{V}$, the posterior distribution $p(\sigma|\vu)$ concentrates at $\hat{\sigma} = \frac{r}{\sqrt{D-d}}$ with variance $\mathcal{O}(D^{-1})$, where $r = \min_{\mathbf{y} \in \mathbb{V}} \|\vu - \mathbf{y}\|$.
\end{proposition}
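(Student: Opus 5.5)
The plan is to integrate out $\vx$ exactly, which the flat prior on $\mathbb{V}$ makes possible, and then analyze the resulting one-dimensional likelihood in $\sigma$. Write $\vu = P\vu + P^\perp\vu$, where $P$ is the orthogonal projector onto $\mathbb{V}$ and $P^\perp = I - P$. Then $r = \|P^\perp \vu\|$. Since $\vx \in \mathbb{V}$, we have
\begin{equation}
\|\vu - \vx\|^2 = \|P\vu - \vx\|^2 + r^2 .
\end{equation}
Integrating the Gaussian likelihood against $p(\vx)\propto 1$ over $\mathbb{V}\cong\mathbb{R}^d$ gives
\begin{equation}
p(\vu\mid\sigma) \propto (2\pi\sigma^2)^{-D/2}(2\pi\sigma^2)^{d/2}\exp\!\left(-\frac{r^2}{2\sigma^2}\right) \propto \sigma^{-(D-d)}\exp\!\left(-\frac{r^2}{2\sigma^2}\right).
\end{equation}
This is exactly the manifold computation in Case 2 of Appendix~\ref{sec:proof_concentration}, except that here it is exact rather than a Laplace approximation, because $\mathbb{V}$ is flat. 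Write $k = D-d$.

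Next, I would take a prior on $\sigma$ that is flat, or at least bounded and slowly varying near the relevant scale. The posterior is then $p(\sigma\mid\vu) \propto \sigma^{-k}e^{-r^2/(2\sigma^2)}$. Its log-density is
\begin{equation}
\ell(\sigma) = -k\log\sigma - \frac{r^2}{2\sigma^2}.
\end{equation}
Setting $\ell'(\sigma) = -k/\sigma + r^2/\sigma^3 = 0$ gives the mode $\hat\sigma = r/\sqrt{k} = r/\sqrt{D-d}$, as claimed. The curvature at the mode is
\begin{equation}
\ell''(\hat\sigma) = k/\hat\sigma^2 - 3r^2/\hat\sigma^4 = -2k/\hat\sigma^2 .
\end{equation}
A Laplace approximation therefore gives posterior variance about $\hat\sigma^2/(2k)$. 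Relative to the squared scale $\hat\sigma^2$, this is $\mathcal{O}(1/(D-d))$, which is $\mathcal{O}(D^{-1})$ when $d$ is fixed or $d \le cD$ with $c<1$.

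To make this rigorous rather than only a Laplace heuristic, I would change variables to $v = \sigma^2$ (or $s = 1/\sigma^2$). In $s$, the posterior is an exact Gamma distribution, with shape about $(k-1)/2$ and rate $r^2/2$; the precise shape depends on the prior Jacobian. Its mean is about $k/r^2 = 1/\hat\sigma^2$, and its variance is $\mathcal{O}(k/r^4)$. Hence the coefficient of variation of $s$ is $\mathcal{O}(k^{-1/2})$. Chebyshev's inequality, or the standard Gamma concentration bound, then gives $s\hat\sigma^2 \to 1$ in probability at rate $k^{-1/2}$. The delta method transfers this to $\sigma = s^{-1/2}$, giving $\mathrm{Var}(\sigma/\hat\sigma) = \mathcal{O}(1/k)$. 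This matches the Inverse-Gamma picture of Lemma~\ref{lemma:ig_concentration}, now used in the regime of large shape and fixed scale rather than vanishing scale.

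The main obstacle is interpretive rather than computational. The claim "variance $\mathcal{O}(D^{-1})$" must be read as relative variance: the absolute variance is $\hat\sigma^2\cdot\mathcal{O}(1/k)$, and it holds under the assumption that the codimension $k = D-d$ grows proportionally to $D$. The proof must also check that the prior on $\sigma$ and the Jacobian factors only shift the Gamma shape parameter by $\mathcal{O}(1)$. I would handle this by stating the prior assumption explicitly: a density bounded above and below on a neighborhood of $\hat\sigma$, or a power-law prior $\sigma^{-m}$ with fixed $m$. Either choice changes $k$ to $k+\mathcal{O}(1)$ and leaves the mode and rate unchanged to leading order.
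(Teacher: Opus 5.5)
Your proposal is correct and follows the paper's proof in substance. Both use the same Pythagorean marginalization over $\mathbb{V}$ giving $\sigma^{-(D-d)}e^{-r^2/(2\sigma^2)}$, the same mode $\hat\sigma = r/\sqrt{D-d}$, and the same curvature $2(D-d)/\hat\sigma^2$, which yields a Laplace variance of $\hat\sigma^2/(2(D-d))$. You make the Laplace step rigorous through the exact Gamma posterior in $s = 1/\sigma^2$; the paper skips that and instead adds a frequentist check that $r^2/\sigma_0^2 \sim \chi^2_{D-d}$, so $\hat\sigma^2 \to \sigma_0^2$. That check makes $\hat\sigma = O(1)$, which is why the paper can state the variance bound in absolute terms rather than as the relative bound you adopt.
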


\begin{proof}
Let $\vx^* = \text{proj}_{\mathbb{V}}(\vu)$ be the orthogonal projection of $\vu$ onto $\mathbb{V}$. For any $\vx \in \mathbb{V}$, the Pythagorean theorem yields the exact decomposition:
\begin{equation*}
\|\vu - \vx\|^2 = \|\vu - \vx^*\|^2 + \|\vx^* - \vx\|^2 = r^2 + \|\vx^* - \vx\|^2
\end{equation*}

The marginal likelihood (or integrated likelihood) is defined as:
\begin{align*}
p(\vu|\sigma) &= \int_{\mathbb{V}} p(\vu|\vx, \sigma)p(\vx) \, d\vx \\
&= \frac{1}{(2\pi\sigma^2)^{D/2}} \int_{\mathbb{V}} \exp\left( -\frac{r^2 + \|\vx^* - \vx\|^2}{2\sigma^2} \right) \, d\vx
\end{align*}
By defining an isometric isomorphism between $\mathbb{V}$ and $\mathbb{R}^d$, we evaluate the integral over the subspace:
\begin{align*}
p(\vu|\sigma) &= \frac{\exp(-r^2/2\sigma^2)}{(2\pi\sigma^2)^{D/2}} \int_{\mathbb{R}^d} \exp\left( -\frac{\|\mathbf{z}\|^2}{2\sigma^2} \right) \, d\mathbf{z} \\
&= \frac{\exp(-r^2/2\sigma^2)}{(2\pi\sigma^2)^{D/2}} (2\pi\sigma^2)^{d/2} \\
&= (2\pi\sigma^2)^{-\frac{D-d}{2}} \exp\left( -\frac{r^2}{2\sigma^2} \right)
\end{align*}

The log-marginal likelihood $\ell(\sigma) = \ln p(\vu|\sigma)$ is:
\begin{equation*}
\ell(\sigma) = -(D - d) \ln \sigma - \frac{r^2}{2\sigma^2} + \text{const}
\end{equation*}
Solving $\ell'(\sigma) = 0$ gives the maximum likelihood estimator:
\begin{equation*}
-\frac{D - d}{\sigma} + \frac{r^2}{\sigma^3} = 0 \implies \hat{\sigma}^2 = \frac{r^2}{D - d}
\end{equation*}

The observed Fisher Information $\mathcal{I}(\hat{\sigma}) = -\ell''(\hat{\sigma})$ evaluated at $\hat{\sigma}$ is:
\begin{equation*}
\mathcal{I}(\hat{\sigma}) = \left[ -\frac{D - d}{\sigma^2} + \frac{3r^2}{\sigma^4} \right]_{\sigma=\hat{\sigma}} = \frac{2(D - d)}{\hat{\sigma}^2}
\end{equation*}
As $D \to \infty$ (with $d$ fixed), the posterior variance $\text{Var}(\sigma|\vu) \approx \mathcal{I}(\hat{\sigma})^{-1} \propto (D-d)^{-1}$ vanishes via the Laplace approximation. Furthermore, since $r^2/\sigma_0^2 \sim \chi^2_{D-d}$, the estimator $\hat{\sigma}^2$ satisfies $\mathbb{E}[\hat{\sigma}^2] = \sigma_0^2$ and $\text{Var}(\hat{\sigma}^2) = \frac{2\sigma_0^4}{D-d} \to 0$. By the Continuous Mapping Theorem, $\hat{\sigma}$ is a consistent estimator of the true noise level $\sigma_0$, and the posterior distribution $p(\sigma|\vu)$ concentrates at the true value $\sigma_0$.
\end{proof}

\section{Derivation of General Energy-Aligned Decomposition}\label{app:covariance_decomposition}

In this section, we derive the exact relationship between the learned autonomous vector field $f^{*}(\vu)$ and the gradient of the marginal energy $\nabla \Emarg(\vu)$ for general affine noise schedules.

Recall the general autonomous form derived in Eq. (\ref{eq:autonomous_target_expansion}):
\begin{equation}
    f^{*}(\vu) = \mathbb{E}_{t|\vu} \left[ \frac{d(t)}{b(t)}\vu + \left( c(t) - \frac{d(t)a(t)}{b(t)} \right) D_{t}^{*}(\vu) \right]
\end{equation}
We defined the conditional energy as
\begin{equation}
    E_{t}(\vu) = -\log p(\vu|t).
\end{equation}
To relate this to the energy landscape, we substitute the Generalized Tweedie's formula (Eq. 4), which expresses the optimal denoiser in terms of the conditional energy gradient $\nabla E_{t}(\vu)$:
\begin{equation}
    D_{t}^{*}(\vu) = \frac{\vu - b(t)^{2}\nabla E_{t}(\vu)}{a(t)}
\end{equation}
Substituting this into the expression for $f^{*}(\vu)$, we obtain:
\begin{equation}
    f^{*}(\vu) = \mathbb{E}_{t|\vu} \left[ \frac{d(t)}{b(t)}\vu + \left( c(t) - \frac{d(t)a(t)}{b(t)} \right) \left( \frac{\vu - b(t)^{2}\nabla E_{t}(\vu)}{a(t)} \right) \right]
\end{equation}
We simplify the coefficients for the linear term $\vu$ and the gradient term $\nabla E_{t}(\vu)$ separately.

\textbf{1. The Linear Term:}
The coefficient for $\vu$ is:
\begin{equation}
    \frac{d(t)}{b(t)} + \frac{1}{a(t)}\left( c(t) - \frac{d(t)a(t)}{b(t)} \right) = \frac{d(t)}{b(t)} + \frac{c(t)}{a(t)} - \frac{d(t)}{b(t)} = \frac{c(t)}{a(t)}
\end{equation}
Thus, the linear component of the field is simply the posterior expectation of the target scale ratio:
\begin{equation}
    f^{*}_{linear}(\vu) = \vu \cdot \mathbb{E}_{t|\vu} \left[ \frac{c(t)}{a(t)} \right]
\end{equation}

\textbf{2. The Gradient Term:}
The coefficient for $\nabla E_{t}(\vu)$ is:
\begin{equation}
    -\frac{b(t)^{2}}{a(t)} \left( c(t) - \frac{d(t)a(t)}{b(t)} \right) = -\frac{b(t)^{2}c(t)}{a(t)} + b(t)d(t) = \frac{b(t)}{a(t)} \left( d(t)a(t) - c(t)b(t) \right)
\end{equation}
Let us define this effective gradient gain as $\lambda(t)$:
\begin{equation}
    \lambda(t) \triangleq \frac{b(t)}{a(t)} \left( d(t)a(t) - c(t)b(t) \right)
\end{equation}
The autonomous field can now be written as:
\begin{equation}
    f^{*}(\vu) = \mathbb{E}_{t|\vu} \left[ \lambda(t) \nabla E_{t}(\vu) \right] + \vu \cdot \mathbb{E}_{t|\vu} \left[ \frac{c(t)}{a(t)} \right]
\end{equation}
Finally, we apply the covariance decomposition to the expectation term. Recall that the marginal energy gradient is the average of the conditional gradients: $\nabla \Emarg(\vu) = \mathbb{E}_{t|\vu}[\nabla E_{t}(\vu)]$. Using the identity $\mathbb{E}[XY] = \mathbb{E}[X]\mathbb{E}[Y] + Cov(X,Y)$, we derive the \textbf{General Energy-Aligned Decomposition}:

\begin{equation}
    f^{*}(\vu) = \underbrace{\overline{\lambda}(u)\nabla \Emarg(\vu)}_{\text{Natural Gradient}} + \underbrace{Cov(\lambda(t), \nabla E_{t}(\vu))}_{\text{Transport Correction}} + \underbrace{\overline{c}_{scale}(\vu)\vu}_{\text{Linear Drift}}
\end{equation}

where $\overline{\lambda}(\vu) = \mathbb{E}_{t|\vu}[\lambda(t)]$ is the posterior effective gain, and $\overline{c}_{scale}(\vu) = \mathbb{E}_{t|\vu}[c(t)/a(t)]$ is the mean linear drift coefficient. This result proves that for any affine schedule, the learned field is a sum of the marginal energy gradient (scaled by posterior uncertainty), a covariance correction term, and a linear drift.

\section{Analysis of Specific Architectures: Exact Low-Noise Asymptotics}\label{sec:asymptotics}

In this section, we show that near the data manifold, the effective gain $\overline{\lambda}(\vu)$ in the vector field vanishes at a rate that perfectly counteracts the divergence of the gradient of the marginal energy. We assume discrete data to analyze the asymptotic behavior of the learned autonomous field $f^*(\vu)$ near the data manifold ($\vu \to \mathcal{X}$). We derive the exact form using the limit of the marginal energy gradient:
\begin{equation}
    \nabla_{\vu} E_{marg}(\vu) \approx \frac{\vu - a(t)\mathbf{x}}{b(t)^2}
    \label{eq:grad_asymp}
\end{equation}
Substituting this into the General Energy-Aligned Decomposition ($f^* = \overline{\lambda}\nabla E + \text{Drift}$), we examine the properties of the target learned by each parameterization.

\textbf{1. DDPM and EDM (Noise/Data Prediction)}
For these models, $a(t) \approx 1$. The effective gain is $\lambda(t) \approx b(t)$ and the drift is zero.
\begin{equation}
    f^{*}(\vu) \approx \underbrace{b(t)}_{\text{Gain}} \cdot \underbrace{\frac{\vu - \mathbf{x}}{b(t)^2}}_{\text{Gradient}} = \frac{\vu - \mathbf{x}}{b(t)} \sim O(1)
\end{equation}
\textbf{Result (Bounded Target):} The learned target (which corresponds to $\epsilon$ or scaled data) contains a removable singularity of order $O(1)$. While the target itself is bounded, it does not inherently define the flow dynamics. Standard diffusion ODEs typically scale this target by $1/b(t)$ (i.e., $d\vu/dt \propto f^*/b(t)$). Thus, while the network learns a stable quantity, the implied autonomous dynamics may still diverge without careful parameterization (see Section 6).

\textbf{2. Flow Matching (Velocity Prediction)}
For Flow Matching, $a(t) \approx 1-t$ and $b(t) \approx t$. The gain is $\lambda(t) \approx t$ and drift is $\approx -\mathbf{x}$.
\begin{equation}
    f^{*}(\vu) \approx \underbrace{t}_{\text{Gain}} \cdot \underbrace{\frac{\vu - (1-t)\mathbf{x}}{t^2}}_{\text{Gradient}} \underbrace{- \mathbf{x}}_{\text{Drift}} = \frac{\vu - \mathbf{x}}{t} \sim O(1)
\end{equation}
\textbf{Result (Stable Transport):} The learned target simplifies to a finite velocity vector. Crucially, for Flow Matching, this target \textit{is} the ODE velocity ($d\vu/dt = f^*(\vu)$). Since the target is $O(1)$, the resulting generation trajectories approach the manifold with finite speed, ensuring stable transport without numerical explosion.

\textbf{3. Equilibrium Matching (Stabilized Transport)}
For EqM, the gain is higher-order $\lambda(t) \approx t^2$, and the drift coefficient vanishes.
\begin{equation}
    f^{*}(\vu) \approx \underbrace{t^2}_{\text{Gain}} \cdot \underbrace{\frac{\vu - (1-t)\mathbf{x}}{t^2}}_{\text{Gradient}} + 0 = \vu - (1-t)\mathbf{x} \xrightarrow{t \to 0} \vu - \mathbf{x}
\end{equation}
\textbf{Result (Vanishing Equilibrium):} The learned target scales with the distance to the manifold, vanishing as $O(\|\vu-\mathbf{x}\|)$. Since EqM uses this target directly as the ODE velocity, the dynamics naturally slow down and stop at the data ($d\vu/dt \to 0$). This creates a stable fixed point at the manifold, contrasting with the constant-velocity transport of Flow Matching.

\section{Derivation of Stability Conditions}
\label{sec:stability_details}

In this appendix, we provide the derivation of the Unified Sampler Dynamics and the rigorous proofs for the stability limits of the three parameterizations discussed in Section~\ref{sec:stability}.

\subsection{Unified Sampler Dynamics}

For general affine noise schedules defined by $\vu_t = a(t)\vx + b(t)\ve$, the generation process involves integrating a differential equation. We derive the exact ODE by inverting the linear system relating the data, noise, and observation.

The flow of the process is given by differentiating the forward process:
\begin{equation}
    \dot{\vu} = \dot{a}(t)\vx + \dot{b}(t)\ve.
\end{equation}
An autonomous generative model predicts a target $f^*(\vu) = c(t)\vx + d(t)\ve$. Combining this with the observation identity $\vu = a(t)\vx + b(t)\ve$, we form the linear system:
\begin{equation}
    \begin{bmatrix} \vu \\ f^*(\vu) \end{bmatrix} = \begin{bmatrix} a(t) & b(t) \\ c(t) & d(t) \end{bmatrix} \begin{bmatrix} \vx \\ \ve \end{bmatrix}.
\end{equation}
Solving for $\vx$ and $\ve$ and substituting them into the flow equation $\dot{\vu}$, we obtain the general sampler ODE:
\begin{equation}
    \frac{d\vu}{dt} = \underbrace{\left( \frac{\dot{a}d - \dot{b}c}{ad - bc} \right)}_{\mu(t)} \vu + \underbrace{\left( \frac{\dot{b}a - \dot{a}b}{ad - bc} \right)}_{\nu(t)} f^*(\vu).
    \label{eq:appendix_sampler_ode}
\end{equation}
We identify $\mu(t)$ as the schedule drift coefficient and $\nu(t)$ as the effective gain of the parameterization.

In order to analyze the stability of generation, we also consider generation using the optimal conditional model $f^*_t(\vu)$ that has access to the noise level. We define the Drift Perturbation Error $\Delta \vv$ as the norm difference between the autonomous drift (using $f^*(\vu)$) and the oracle drift. Since the linear term $\mu(t)\vu$ is identical for both, it cancels out giving us:
\begin{equation}
    \Delta \vv(\vu, t) = |\nu(t)| \cdot \| f^*(\vu) - f_t^*(\vu) \|.
\end{equation}
If this Drift Perturbation Error has singularities in it, this results in unstable generation dynamics.

\subsection{Stability Analysis by Parameterization}

We evaluate the limit of $\Delta \vv$ as $t \to 0$ (near the data manifold) for standard models. We assume standard boundary conditions $a(t) \to 1$ and $b(t) \to 0$.

\paragraph{Case 1: Noise Prediction (DDPM/DDIM)}
Target: $\ve$ ($c=0, d=1$).
The effective gain simplifies to $\nu(t) = \frac{\dot{b}a - \dot{a}b}{a}$. Near the manifold ($a \approx 1, \dot{a} \approx 0$), the gain behaves as the noise derivative: $\nu(t) \approx \dot{b}(t)$.
The optimal conditional target near the manifold is given by the geometric relation $\ve^*_t(\vu) \approx \frac{\vu - \vx}{b(t)}$.
Substituting this into the error norm:
\begin{equation}
    \Delta \vv_{noise} \approx \dot{b}(t) \left\| \mathbb{E}_{\tau|\vu}\left[\frac{\vu - \vx}{b(\tau)}\right] - \frac{\vu - \vx}{b(t)} \right\|.
\end{equation}
Factoring out the geometric direction $\|\vu - \vx\|$, we isolate the scaling behavior:
\begin{equation}
    \Delta \vv_{noise} \approx \|\vu - \vx\| \left| \frac{\dot{b}(t)}{b(t)} \underbrace{\left( b(t)\mathbb{E}_{\tau|\vu}\left[\frac{1}{b(\tau)}\right] - 1 \right)}_{\text{Jensen Gap}} \right|.\label{eq:amplification_factor}
\end{equation}
We call the term in the parenthesis the ``Jensen Gap'', the difference between the harmonic mean of noise levels and the true noise level, which converges to a non-zero constant due to the strict convexity of $1/x$ unless the posterior $p({\tau|\vu})$ converges to a Dirac measure $\delta(\tau - t)$. The instability is driven by the pre-factor $\frac{\dot{b}(t)}{b(t)} = \frac{d}{dt} \ln b(t)$. For any polynomial noise schedule $b(t) \propto t^k$ (where $k > 0$), this term diverges as $O(1/t)$.
Consequently, $\lim_{t \to 0} \Delta \vv_{noise} = \infty$, rendering the dynamics structurally unstable for autonomous noise prediction. Note that for Variance Preserving SDEs where $b(t) \propto \sqrt{t}$, the singularity is even stronger ($O(t^{-1.5})$), but the divergence exists even for linear schedules ($b(t)=t$) that is used in flow matching.

\paragraph{Case 2: Signal Prediction (EDM)}
Target: $\vx$ ($c=1, d=0$).
The effective gain scales as $\nu(t) \approx \frac{1}{b(t)^2}$.
The drift error is determined by the denoising error:
\begin{equation}
    \Delta \vv_{signal} \approx \frac{1}{b(t)^2} \| \hat{\vx}(\vu) - \vx_t^*(\vu) \|.
\end{equation}
To resolve the $0/0$ indeterminacy, we assume the data manifold is discrete, $\mathcal{X}=\{\vx_k\}_{k=1}^N$. Near a specific data point $\vx_k$, the optimal conditional denoiser $\vx_t^*(\vu)$ is a softmax-weighted average of the dataset. The error is dominated by the distance to the nearest neighbor $\vx_j$, with a weight proportional to the Gaussian likelihood ratio:
\begin{equation}
    \| \vx_t^*(\vu) - \vx_k \| \propto \exp\left( -\frac{\|\vx_j - \vx_k\|^2}{2b(t)^2} \right).
\end{equation}
This error decays \textit{exponentially} with respect to the inverse variance $1/b(t)^2$. Since the noise-agnostic estimator $\hat{\vx}(\vu)$ is a mixture of these conditional estimators, it inherits this exponential decay.
The limit becomes a competition between the polynomial divergence of the gain and the exponential convergence of the estimator:
\begin{equation}
    \lim_{t \to 0} \frac{e^{-C/b(t)^2}}{b(t)^2} = 0.
\end{equation}
Thus, $\lim_{t \to 0} \Delta \vv_{signal} = 0$, proving that signal prediction is asymptotically stable for autonomous models on discrete data manifolds.

\paragraph{Case 3: Velocity Prediction (Flow Matching)}
Target: $\vv = \dot{\vu}$ ($c=-1, d=1$).
The denominator of the unified coefficients is $ad-bc = 1$, resulting in a constant gain $\nu(t) = 1$.
The error norm is simply the posterior deviation of the velocity field:
\begin{equation}
    \Delta \vv_{FM} = \| \mathbb{E}_{\tau|\vu}[f^*_\tau(\vu)] - f^*_t(\vu) \|.
\end{equation}
Since the optimal autonomous targets $f^*$ are bounded and the gain $\nu(t)$ is unity, the error term does not contain singularities. Therefore, the Drift Perturbation Error $\Delta \vv_{FM}$ remains bounded, indicating that velocity parameterization is inherently stable for autonomous generation.

\end{document}